\newcommand{\ignore}[1]{}
\theoremstyle{plain}
\theoremstyle{definition}
\newtheorem{thm}{Theorem}[section]
\newtheorem{lem}[thm]{Lemma}
\theoremstyle{definition}
\theoremstyle{remark}
\newcommand\subparagraph{%
  \@startsection{subparagraph}{5}
  {\parindent}
  {3.25ex \@plus 1ex \@minus .2ex}
  {-1em}
  {\normalfont\normalsize\bfseries}} 
\def \etc {{etc}}
\def \ie {{i.e., }}
\def\@thmcountersep{.}
\begin{document}

\title{Generalized Twin Gaussian Processes using \\ Sharma-Mittal Divergence
}
\subtitle{}


\author{Mohamed Elhoseiny         \and
        Ahmed Elgammal 
}


\institute{Mohamed Elhoseiny \at
              110 Frelinghuysen Road, \\ 
              Piscataway, NJ 08854-8019 \\
              USA\\
              Tel.: +1-732-208-9712\\	
              \email{m.elhoseiny@cs.rutgers.edu}           
           \and
           Ahmed Elgammal \at
           110 Frelinghuysen Road, \\ 
           Piscataway, NJ 08854-8019 \\
           USA\\
           \email{elgammal@cs.rutgers.edu}          
}

\date{Received: date / Accepted: date}

\maketitle

\begin{abstract} 
There has been a growing interest in mutual information measures due to their wide range of applications in Machine Learning and Computer Vision.  In this paper, we present a generalized structured regression framework based on Shama-Mittal divergence, a relative entropy measure, which is introduced to the Machine Learning community in this work. Sharma-Mittal (SM) divergence is a generalized mutual information measure for the widely used	 R\'enyi, Tsallis, Bhattacharyya, and Kullback-Leibler (KL) relative entropies. Specifically, we study Sharma-Mittal divergence as a cost function in the context of the Twin Gaussian Processes (TGP)~\citep{Bo:2010}, which generalizes over the KL-divergence without computational penalty.  We show interesting properties of Sharma-Mittal TGP (SMTGP) through a theoretical analysis, which covers missing insights in the traditional TGP formulation. However, we generalize this theory based on SM-divergence instead of KL-divergence which is a special case. Experimentally, we evaluated the proposed SMTGP framework on several datasets. The results show that SMTGP  reaches better predictions than KL-based TGP, since it offers a bigger class of models through its parameters that we learn from the data.

\keywords{Sharma-Mittal Entropy \and Structured Regression \and Twin Gaussian Processes  \and Pose Estimation \and Image Reconstruction}

\end{abstract}

\section{Introduction}
\label{sec:intro}
\begin{sloppypar}
Since 1950s, a lot of work has been done to measure information and probabilistic metrics. Claude Shannon~\citep{Shannon:2001} proposed a powerful framework to mathematically quantify \ignore{the intuition behind }information \ignore{quantity}, which has been the foundation of the information theory and the development in communication, networking, and a lot of Computer Science applications. Many problems in Physics and Computer Science require a reliable measure of information divergence, which have motivated many mathematicians, physicists, and computer scientists to study different divergence measures. For instance, R\'enyi~\citep{Renyi60}, Tsallis~\citep{Tsallis88} and Kullback-Leibler divergences~\citep{KLRef90} have been applied in many \ignore{physics (such as~\citep{IsSM07}) and }Computer Science applications. They have been effectively used in machine learning for many tasks including subspace analysis~\citep{Learned-Miller:2003,Poczos:2005,VanHulle:2008,Szabó07a}, facial expression recognition~\citep{Shan2005}, texture classification~\citep{Hero01alpha-divergencefor}, image registration~\citep{kybic06}, clustering~\citep{agha07}, non-negative matrix factorization~\citep{Yu2013} and  3D pose estimation~\citep{Bo:2010}.  
\end{sloppypar}

In the Machine Learning community, a lot of attempts have been done  to understand information and connect it to uncertainty. Many of proposed terminologies turns out to be different views of the same measure. For instance, Bregman Information~\citep{Banerjee:2005}, Statistical Information~\citep{DeGroot62}, Csiszár-Morimoto f-divergence, and the gap between the expectations in Jensen's inequality (\ie    the Jensen gap)~\citep{jensen06} turn out to be equivalent to the maximum reduction in uncertainty for convex functions, in contrast with the prior probability distribution~\citep{Reid:2011}\ignore{; ``Mathematics is the Art of Giving the Same Name to Different Things, Henri Poincare''}. 

\begin{sloppypar}
A lot of work has been proposed in order to unify divergence functions~\citep{AmNa00,Reid:2011,Zhang2007,Zhang2004}. \ignore{However, Cichoki et al}~\citet{CichockiA10} considered explicitly  the relationships between Alpha-divergence~\citep{CichockiLKCAlpha08}, Beta-divergence~\citep{KompassBeta07} and Gamma-divergence~\citep{CichockiA10}; each of them is a single-parameter divergence measure. Then, \ignore{Cichocki et al }~\citet{Cichocki11} introduced a two-parameter family\ignore{~\citep{Cichocki11}}. However, we study here a two-parameter divergence measure~\citep{SM75}, investigated in  the Physics community, which is interesting to be considered in the Machine Learning community.  
\end{sloppypar}

 \ignore{Akt\"urk et al }~\citet{IsSM07}, physicists\footnote{ This work was proposed four years before~\citet{Cichocki11} and it was not considered either as a prior work in the Machine Learning community as far as we know},  studied an entropy measure called Sharma-Mittal on theormostatics in 2007, which was originally introduced by Sharma BD et al~\citep{SM75}. Sharma-Mittal (SM) divergence has two parameters ($\alpha$ and $\beta$), detailed later in Section~\ref{sec:background}.\ignore{Akt\"urk et al~\citep{IsSM07}}~\citet{IsSM07} discussed that SM entropy generalizes both Tsallis ($\beta \to \alpha$) and R\'enyi entropy ($\beta \to 1$) in the limiting cases of its two parameters; this was originally showed by~\citep{sb05}. In addition, it can be shown that SM entropy converges to Shannon entropy as $\alpha, \beta \to 1$.  Akt\"urk et al also suggested a physical meaning of SM entropy, which is the free energy difference between the equilibrium and the off-equilibrium distribution. \ignore{Through thermostatic experiments, they showed that SM entropy acquires this interpretation in some limit cases. }In 2008, SM entropy was also investigated in multidimensional harmonic oscillator systems~\citep{thph08}. Similarly, SM  relative entropy (mutual information) generalizes each of the R\'enyi, Tsallis and KL mutual information divergences. This work in physics domain motivated us to investigate SM Divergence in the Machine Learning domain.

\begin{sloppypar}
A closed-form expression for SM divergence  between two Gaussian distributions was recently proposed~\citep{SM:2012}, which motivated us to study this measure in structured regression setting. In this paper, we present a generalized framework for structured regression utilizing a family of divergence measures that includes SM divergence, R\'enyi divergence, Tsallis divergence  and KL divergence. In particular, we study SM divergence within the context of Twin Gaussian Processes (TGP), a state-of-the-art structured-output regression method.\ignore{ Liefeng Bo~\citep{Bo:2010}}~\citet{Bo:2010} proposed TGP as a structured prediction approach based on estimating the KL divergence from the input  to output Gaussian Processes, denoted by KLTGP\footnote{ that is why it is called Twin Gaussian Processes}. Since KL divergence is not symmetric,\ignore{ Liefeng Bo~\citep{Bo:2010}}~\citet{Bo:2010} also studied TGP based on KL divergence from the output to the input data, denoted by IKLTGP (Inverse KLTGP). In this work, we present a generalization for TGP using the SM divergence, denoted by SMTGP. Since SM divergence is a two-parameter family, we study the effect of these parameters and how they are related to the distribution of the data. In the context TGP, we show that these two parameters, $\alpha$ and $\beta$, could be interpreted as distribution bias and divergence order in the context of structured learning. We also highlight probabilistic causality direction of the SM objective function\footnote{This is mainly detailed in section~\ref{sec:eiganalsis}}.  More specifically, there are six contributions to this paper
\end{sloppypar}
\begin{enumerate}[noitemsep,topsep=0pt,parsep=0pt,partopsep=0pt,leftmargin=*]
\item The first presentation of SM divergence in the Machine Learning Community
\item A generalized version of TGP based on of SM divergence to predict structured outputs; see Subsections~\ref{sec:SMTGP}.
\item A simplification to the SM divergence closed-form expression in~\citep{SM:2012} for Multi-variate Gaussian Distribution\footnote{This simplification could be useful out of the context  TGP, while computing SM-divergence between two multi-variate distributions}, which reduced both the cost function evaluation and  the gradient computation, used in our prediction framework; see Subsections~\ref{sec:ISMTGP} and~\ref{ss:ncfa}.
\item Theoretical analysis of TGP under SM divergence in  Section~\ref{sec:eiganalsis}.
\item A certainty measure, that could be associated with each structured output prediction, is argued in subsection~\ref{sec:tgpcostanalysis}.
\item An experimental demonstration that SM divergence improves on KL divergence under TGP prediction by correctly tuning $\alpha$ and $\beta$ through cross validation on two toy examples and three real datasets; see Section~\ref{sec:experiments}.
\end{enumerate}

\vspace{3mm}

The rest of this paper is organized as follows: Section~\ref{sec:background} presents background on SM Divergence and its available closed-form expression for multivariate Gaussians. Section~\ref{sec:smTGP} presents the optimization problem used in our framework and the derived analytic gradients. \ignore{In addition, it shows an argument on the performance of SMTGP  by comparing the existing closed-form expressions of SM divergence and a simplified version that we derived;} Section~\ref{sec:eiganalsis} presents our theoretical analysis on TGP under our framework from spectral perspective. Section~\ref{sec:experiments} presents our experimental validation. Finally, Section~\ref{sec:conclusion} discusses and concludes our work.

\ignore{
\noindent{
(i) The first presentation of SM divergence in the Machine Learning Community.

(ii) An application of SM Divergence to predict structured output; see subsection~\ref{sec:SMTGP}..

(iii) A simplification to the SM Divergence for Multi-variate Gaussian Distribution, which improves the computational complexity of our prediction framework; see subsection~\ref{sec:ISMTGP}..

(iv) Theoretical analysis of TGP under SM Divergence in  section~\ref{sec:eiganalsis}..

(v) A certainty measure that could be associated each the structured output prediction is argued in subsection~\ref{sec:tgpcostanalysis}.

(vi) An experimental demonstration that SM Divergence can improve on KL Divergence under TGP Prediction by correctly tuning $\alpha$ and $\beta$ through cross validation on 2 toy examples and three real datasets; see section~\ref{sec:experiments}.}

}
\ignore{
Under TGP prediction, We demonstrate experimentally that SM Divergence can improve on KL Divergence by tuning $\alpha,\beta$ through cross validation on 2 toy examples and  3 datasets.
}
\ignore{
\subsection{ICML Author Feedback}
First, we thank the reviewers for their feedback. We tried our best to understand the concerns in the responses, since they are not detailed according to our experiences to NIPS, and ICML review process (perhaps due to the limited review time) . We will pretty much appreciate if the reviewers jointly consider our theoretical and practical contributions that, we summarize in this response. Looking at our work as incremental, by replacing KL-Div with SM-Div (1stly presented in our community) and having better results, significantly underestimate our work.

It is well known that Gaussian Process Regression is another view of kernel ridge regression, that has been addressed as a valuable work due to certainty measure provided for the prediction. Similarly, our work start by studying some theoretical perspectives of KL TGPs ( in 2010), adopted in recent work (e.g. Makoto ECCV12) from 2010-2013. We are not only proposing to use SM-Div in the place of KL-Div. We are reporting an experience of 8 months of work, that have some interesting results to share with our community from both theoretical and practical perspectives. 

Theoretical Contribution: A critical theoretical aspect that is missing in the original TGP formulation is understanding the cost function from regression-perspective. We cover this missing theory not by analyzing the cost function of KL only, we, instead, provide an understanding of SM TGP cost function, which covers (KL, Renye, Tsallis, Bhattacharyya as special cases of its parameters). Our claims are supported by a theoretical analysis, which is presented in Sec 4 in the paper and the supp materials. Sec 4 is clearly written and the supp (lines 140-215) details an interesting interpretation concluded in lines 591-604 in the paper. Sec 4 together with the proofs of Lemma 4.1 and 4.2 (in App A and B, p-8) are valuable contributions of our work. In Sec 3, we also present a computationally efficient closed form expression for SMDiv. Eq 4, derived directly from the closed form in [Neilson 2012], requires 2 matrix inversions and 3 determinant computations if $(\delta \mu = 0$). However, Eq 11 ( Lma 3.1) requires only 3 matrix determinant computations. This simplification could be used to efficiently compute SM Div between two Gaussian Distributions, out of the context of TGPs.

Practical Contributions and Complexity: Since our theoretical analysis is based on SMTGP, we firstly presented the new cost function in sec 3.1, whose direct gradient computation based on [Neilson 2012] is cubic complexity. However, our simplification in Lma 3.1 (sec 3.2) makes it straightforward to compute it for prediction in quadratic complexity based on the new equivalent cost function ( given that the matrix inverses are computed during the training as illustrated in lines 440-447 in the paper ). We agree with Rev3’s comment on the complexity, but this point is clarified in the paper in lines 440-447.However, we will consider having training and testing complexities stated clearer in the final version of the paper. We extensively evaluated our approach on various datasets.

Rev4 was wondering how $\phi(\alpha)$ is related to any of the important parameter in our method. This relationship is already covered in Lma 4.1 and 4.2 and their proofs (Lma 4.2 proof shows the relationship between $\phi(\alpha)$ through derivation that starts from SMTGP cost function). We also showed its correlation with the test error in Fig1.

Rev3 argued about Cichoki \& Amari’s work. Our work shows another aspect of the relationship between entropies that was not covered by Cichoki’s 2010, which did not refer to SM generalization, proposed 3 yrs before Cichoki’s work by Akturk 2007(3rd ref in the paper). In addition, We believe SM-TGP is very interesting, since a single formulation spans 5 divergence measures. We will add this valuable reference in our intro so that the reader can also refer to other-div measures.}

\section{Sharma-Mittal Divergence}
\label{sec:background}

This section addresses a background on SM-divergence and its closed form for the multivariate Gaussian distribution.

\subsection{SM Family Divergence Measures}
\label{ss:smd}
The SM divergence, $D_{\alpha, \beta}(p:q)$,  between two distributions $p(t)$ and $q(t)$ is defined as~\citep{SM75}
\begin{equation}
\small
\begin{split}
D_{\alpha, \beta}(p:q) = &\frac{1}{\beta-1} (\int_{-\infty}^{\infty}{p(t)^\alpha q(t)^{1-\alpha} dt} )^\frac{1-\beta}{1-\alpha} -1), \forall \alpha >0 , \alpha \neq 1 , \beta \neq 1. \\
\end{split}
\label{eq:smgen}
\end{equation}
It was shown in~\citep{IsSM07} that most of the widely used divergence measures are special cases of SM divergence. Each of the R\'enyi, Tsallis and KL divergences can be defined as limiting cases of SM divergence as follows:
\begin{equation}
\small
\begin{split}
R_{\alpha}(p:q) = &\lim_{\beta \to 1} {D_{\alpha, \beta}(p:q) }  = \frac{1}{\alpha-1} ln (\int_{-\infty}^{\infty}{p(t)^\alpha q(t)^{1-\alpha} dt} )),  	\forall \alpha >0 , \alpha \neq 1.  \\
T_{\alpha}(p:q) = &{D_{\alpha, \alpha}(p:q) }= \frac{1}{\alpha-1} (\int_{-\infty}^{\infty}{p(t)^\alpha q(t)^{1-\alpha} dt} ) -1),  \forall \alpha >0 , \alpha \neq 1 , \\
KL(p:q) = & \lim_{\beta \to 1, \alpha \to 1} {D_{\alpha, \beta}(p:q) } = \int_{-\infty}^{\infty}{p(t). ln ( \frac{p(t)}{q(t)} dt}) 
\end{split}
\label{eqdef}
\end{equation}
where  $R_{\alpha}(p:q)$, $T_{\alpha}(p:q)$ and $KL(p:q)$ denotes R\'enyi, Tsallis, KL divergences respectively. We also found that Bhattacharyya divergence~\citep{bhatt67}, denoted by $B(p:q)$   is a limit case of SM and R\'enyi divergences as follows 
\begin{equation*}
\small
\begin{split}
B(p:q)& = 2 \cdot \lim_{\beta \to 1, \alpha \to 0.5}  D_{\alpha,\beta}(p:q) = 2 \cdot \lim_{\alpha \to 0.5}  R_{\alpha}(p:q) = - ln \Big( \int_{-\infty}^{\infty} p(x)^{0.5} q(x)^{0.5} dx \Big).
\end{split}
\end{equation*}

\begin{sloppypar}
While SM is a two-parameter  generalized entropy measure originally introduced by\ignore{ Sharma \& Mittal~\citep{SM75} }~\citet{SM75}\ignore{ in 1975}, it is worth to mention that two-parameter family of divergence functions has been recently proposed in the machine learning community since 2011~\citep{Cichocki11,Zhang2013}. It is shown  in~\citep {CichockiA10} that the Tsallis entropy is connected  to the Alpha-divergence~\citep{CichockiLKCAlpha08}, and Beta-divergence~\citep{KompassBeta07}\footnote{ Alpha and Beta divergence should not be confused with $\alpha$ and $\beta$ parameters of Sharma Mittal divergence}, while
the R\'enyi entropy is related to the Gamma-divergences~\citep{CichockiA10}. The  Tsallis and R\'enyi  relative entropies are two different generalization of the standard Boltzmann-Gibbs entropy (or Shannon information). However, we focus here on SM divergence for three reasons (1) It generalizes over a considerable family of functions suitable for structured regression problems (2) Possible future consideration of this measure in works that study entropy and divergence functions, (3) SM divergence has a closed-form expression, recently proposed for multivariate Gaussian distributions~\citep{SM:2012}, which is interesting to study. 
\end{sloppypar}

\ignore{
\begin{equation}
\small
\begin{split}
D_{\alpha', \beta'}(p:q) = &\frac{1}{\alpha \beta} (\int_{-\infty}^{\infty}{p(t)^\alpha q(t)^{\beta} - \frac{\alpha}{\alpha+\beta} p(t)^{\alpha + \beta} - \frac{\beta}{\alpha+\beta} q(t)^{\alpha+\beta}  dt} )
\end{split}
\label{eq:smgen}
\end{equation}
}

\ignore{{\noindent{\bf Motivation:}} One of the main} Another motivations of this work is to study how the two parameters of the SM  Divergence, as a  generalized entropy measure, affect the performance of the structured regression problem. Here we show an analogy in the physics domain that motivates our study. As indicated  by~\citet{sb05} in physics domain, it is important to understand that Tsallis and R\'enyi entropies are two different generalizations along two different paths. Tsallis generalizes to non-extensive systems\footnote{\ie In Physics, Entropy is considered to have an extensive property if its value depends on the amount of material present; Tsallis is an non-extensive entropy}, while R\'enyi to quasi-linear means\footnote{\ie R\'enyi  entropy is could be interpreted as an averaging of quasi-arithmetic function ~\citet{IsSM07}}. SM entropy generalizes to non-extensive sets and non-linear means having Tsallis and R\'enyi measures as limiting cases. Hence, in TGP regression setting, this indicates resolving the trade-off  of  having a control of the direction of bias towards one of the distributions (i.e. input and output distributions) by changing $\alpha$. It also allows higher-order divergence measure by changing $\beta$. Another motivation from Physics is that SM entropy is the only entropy that gives rise to a thermostatistics based on escort mean values\footnote{ escort mean values are useful theoretical tools, used in thermostatistics,for describing basic properties of some probability density function ~\citep{phcit09}} and admitting of a partition function~\citep{sm02}. \ignore{In addition, our motivation is based on the fact that SM entropy is an entropy measure that gives rise to a thermostatistics based on escort mean values and admitting of a partition function~\citep{sm02}.}

\subsection{SM-divergence Closed-Form Expression for Multivariate Gaussians}
\begin{sloppypar}
In order to solve optimization problems efficiently over relative entropy, it is critical to have a closed-form formula for the optimized function, which is SM relative entropy in our framework. Prediction over Gaussian Processes~\citep{Rasmussen:2005} is performed practically as a multivariate Gaussian distribution. Hence,  we are interested in finding a closed-form formula for SM relative entropy of distribution $\mathcal{N}_q$ from $\mathcal{N}_p$, such that $\mathcal{N}_p = \mathcal{N}(\mu_p, \Sigma_p)$, and $\mathcal{N}_q = \mathcal{N}(\mu_q, \Sigma_q)$. In 2012, Frank Nielsen proposed a closed form expression for SM divergence~\citep{SM:2012} as follows
\end{sloppypar}
\begin{equation}
\small
\begin{split}
D_{\alpha, \beta}(\mathcal{N}_p:\mathcal{N}_q) =  
\frac{1}{\beta-1} \Big[&\Big(\frac{|\Sigma_p|^{\alpha} {|\Sigma_q}|^{1 -\alpha}}{|(\alpha {\Sigma_p}^{-1} +(1-\alpha) {\Sigma_q}^{-1})^{-1}|} \Big)^{-\frac{1-\beta}{2(1-\alpha)}} \cdot \\
& e^{-\frac{\alpha (1-\beta)}{2}   \Delta\mu^T (\alpha {\Sigma_p}^{-1} +(1-\alpha) {\Sigma_q}^{-1})^{-1} \Delta\mu }  -1\Big] 
\end{split}
\label{eqcfsm1}
\end{equation}
where  $0\le \alpha \le1$, $\Delta\mu = \mu_p - \mu_q$,  $\alpha {\Sigma_p}^{-1} +(1-\alpha) {\Sigma_q}^{-1}$ is a positive definite matrix, and $| \cdot |$ denotes the matrix determinant. The following section builds on this SM closed-form expression to predict structured output under TGP\ignore{ with some enhancements to increase the speed of optimization}, which leads an analytic gradient of the SMTGP cost function with cubic computational complexity. We then present a simplified expression of the closed-form expression in Equation~\ref{eqcfsm1}, which results in an equivalent SMTGP analytic gradient of quadratic complexity.

\section{Sharma-Mittal TGP}
\label{sec:smTGP}

In prediction problems, we expect that similar inputs \ignore{to }produce similar predictions. This notion was adopted in~\citep{Bo:2010,Yamada:2012} to predict structured output based on KL divergence between two Gaussian Processes. This section presents TGP for structured regression by minimizing SM relative entropy. We follow that by our theoretical analysis of TGPs in Section~\ref{sec:eiganalsis}. We begin by introducing some notation. Let the joint distributions of the input and the output be defined as follows
\begin{equation}
\small
\begin{split}
&p(X,x) = \mathcal{N}_X(0, K_{X\cup x}) ,  p(Y,y) = \mathcal{N}_Y(0, K_{Y \cup y}), \\
&K_{X\cup x} = \begin{bmatrix}
K_X & K_X^x\\ 
 {K_X^x}^T& \it{K_X}(x,x)
\end{bmatrix}, K_{Y\cup y} = \begin{bmatrix}
K_Y & K_Y^y\\ 
 {K_Y^y}^T& \it{K_Y}(y,y)
\end{bmatrix} \\
\end{split}
\end{equation}
where $x_{(d_x \times 1)}$  is a new input test point,  whose unknown outcome is $y_{(d_y \times 1)}$ and the training set is $X_{(N\times d_x)}$ and $Y_{(N \times d_y)}$ matrices.  $K_X$ is an $N \times N$ matrix with $(K_X)_{ij} = k_X(x_i, x_j )$, such that $k_X(x_i, x_j )$ is the similarity kernel between $x_i$ and $x_j$.  $K_X^x$ is an $N \times 1$ column vector with $(K_X^x)_i = k_X(x_i, x)$. Similarly, $K_Y$ is an $N \times N$ matrix with $(K_Y)_{ij} = k_Y(y_i, y_j )$, such that $k_Y(y_i, y_j )$ is the similarity kernel between $y_i$  and $y_j$, and $K_Y^y$ is an $N \times 1$ column vector with $(K_Y^y)_i = k_Y(y_i, y)$. By applying  Gaussian-RBF kernel functions, the similarity kernels for inputs and outputs will be in the form of  $k_X(x_i,x_j) = exp(\frac{- \|x_i-x_j\|^2}{2 \rho_x^2})+ \lambda_X  \delta_{ij}$ and $k_Y(y_i,y_j)= exp(\frac{- \|y_i-y_j\|^2}{2 \rho_y^2})+ \lambda_Y  \delta_{ij}$, respectively, where $\rho_x$ and $\rho_y$ are the corresponding kernel bandwidths, $\lambda_X$ and $\lambda_Y$ are regularization parameters to avoid overfitting and to handle noise in the data, and $\delta_{ij} =1$ if $i=j$, $0$ otherwise. 

\subsection{KLTGP and IKLTGP Prediction}
\ignore{Bo et al~\citep{Bo:2010}}~\citet{Bo:2010} firstly proposed TGP which minimizes the Kullback-Leibler divergence between the marginal GP of inputs and outputs. However, they were focusing on the Human Pose Estimation problem. As a result, the estimated pose using TGP is given as the solution of the following optimization problem~\citep{Bo:2010}
\begin{equation}
\small
\begin{split}
\hat{y} =  \underset{y}{\operatorname{argmin       }}[ & L_{KL}(x,y) =  k_Y(y,y)  -2 {K_Y^y}^T u_x - \eta_x log (k_Y(y,y) - {K_Y^y}^T (K_Y)^{-1} K_Y^y ) ]
\end{split}
\label{eq:tgp}
\end{equation}
where $u_x = (K_X)^{-1} {K_X^x}$, $\eta_x  = k_X(x,x) -{K_X^x}^T  u_X $. The analytical gradient of this cost function is defined as follows~\citep{Bo:2010}
\begin{equation}
\small
\begin{split}
\frac{\partial  L_{KL}(x,y)}{\partial  y^{(d)}} &= \frac{\partial   k_Y(y,y)}{\partial  y^{(d)}} - 2  u_x^T \frac{\partial K_Y^y}{\partial  y^{(d)}} - \eta_x \frac{log(\frac{\partial   k_Y(y,y)}{\partial  y^{(d)}} - 2 {K_Y^y}^T (K_Y)^{-1} \frac{K_Y^y}{\partial  y^{(d)}})}{k_Y(y,y) - {K_Y^y}^T (K_Y)^{-1} K_Y^y }
\end{split}
\label{eq:tgpgrad}
\end{equation}
where  $d$ is the dimension index of the output $y$. For Gaussian kernels, we have \\   \begin{equation*}
\frac{\partial  k_Y(y,y)}{\partial  y^{(d)}}  = 0 ,  \frac{\partial {K_Y^y}}{ \partial {y^{(d)}}} = \begin{bmatrix}
-\frac{1}{\rho_y^2} (y^{(d)}- y_1^{(d)}) k_Y(y, y_1)  \\
-\frac{1}{\rho_y^2} (y^{(d)}- y_2^{(d)}) k_Y(y, y_2)  \\
... \\
-\frac{1}{\rho_y^2} (y^{(d)}- y_N^{(d)}) k_Y(y, y_N)  \\
\end{bmatrix}.
\end{equation*}

The optimization problem can be solved using a second order BFGS quasi-Newton optimizer with cubic polynomial line search for optimal step size selection. Since KL divergence is not symmetric, \ignore{Bo et al }~\citet{Bo:2010} also studied inverse KL-divergence between the output and the input distribution under TGP; we denote this model as IKLTGP. Equations~\ref{eq:itgp}  and~\ref{eq:itgpgrad} show the IKLTGP cost function and its corresponding gradient\footnote{ we derived this equation since it was not provided in~\citep{Bo:2010}}.

\begin{equation}
\small
\begin{split}
\hat{y} =&  \underset{y}{\operatorname{argmin       }}[  L_{IKL}(x,y) =  -2 {K_X^x}^T u_y+  u_y^T K_X  u_y +  \eta_y ( log(\eta_y) - log(\eta_x))], \\
& u_y = K_Y^{-1} {K_Y^y}, \eta_y  = k_Y(y,y) -{K_Y^y}^T  u_y 
\end{split}
\label{eq:itgp}
\end{equation}

\begin{equation}
\small
\begin{split}
\frac{\partial  L_{IKL} (x,y)}{\partial  y^{(d)}} &= -2 {K_X^x}^T  K_Y^{-1} \frac{\partial {K_Y^y}}{ \partial {y^{(d)}}} + 2 u_y^T K_X  K_Y^{-1} \frac{\partial {K_Y^y}}{ \partial {y^{(d)}}} \\ & - 2   ( log(\eta_y) - log(\eta_x)+1)  {K_Y^y}^T    K_Y^{-1} \frac{\partial {K_Y^y}}{ \partial {y^{(d)}}}  
\end{split}
\label{eq:itgpgrad}
\end{equation}

From Equations~\ref{eq:tgpgrad} and~\ref{eq:itgpgrad}, it is not hard to see that the gradients of  KLTGP and IKLTGP can be computed in {quadratic} complexity, given that $K_X^{-1}$ and $K_Y^{-1}$ \ignore{(i.e. $O(N^3)$)} are precomputed once during training and stored, as it depends only on the training data. This quadratic complexity of KLTGP gradient presents a benchmark for us to compute the gradient for SMTGP in $O(N^2)$. Hence, we address this benchmark in our framework, as detailed in the following subsections.

\subsection{SMTGP Prediction}
\label{sec:SMTGP}
By applying the closed-form in Equation~\ref{eqcfsm1},  SM divergence between $p(X,x)$ and $p(Y,y)$ becomes in the following form
\begin{equation}
\small
\begin{split}
& D_{\alpha, \beta}(p(X,x) : p(Y,y))= \frac{1}{\beta-1} \Big[\Big(\frac{|K_{X\cup x}|^{\alpha} {|K_{Y \cup y}|}^{1 -\alpha}}{|(\alpha K_{X\cup x}^{-1} +(1-\alpha) {K_{Y \cup y}}^{-1})^{-1}|}\Big)^{-\frac{1-\beta}{2(1-\alpha)}}  -1\Big] \\
\end{split}
\label{smtgp11}
\end{equation}
From matrix algebra, $|K_{X\cup x}| = |K_X| (k_X(x,x) - {K_X^x}^T {K_X}^{-1} {K_X^x})$. Similarly, $|K_{Y \cup y}|$ $= |K_Y| (k_Y(y,y) - {K_Y^y}^T {K_Y}^{-1} {K_Y^y})$. Hence, Equation~\ref{smtgp11} could be rewritten as follows
\begin{equation}
\small
\begin{split}
  D_{\alpha, \beta}(p(X,x) : p(Y,y))= &\frac{|K_X|^\frac{-\alpha(1-\beta)}{2(1-\alpha)}  |K_Y|^\frac{-(1-\beta)}{2}}{\beta -1}  \cdot 
(k_X(x,x) - {K_X^x}^T {K_X}^{-1} {K_X^x})^\frac{-\alpha(1-\beta)}{2(1-\alpha)}  \cdot\\
 (k_Y(y,y) - {K_Y^y}^T& {K_Y}^{-1} {K_Y^y})^\frac{-(1-\beta)}{2}     \cdot   |\alpha K_{X\cup x}^{-1} +(1-\alpha) {K_{Y \cup y}}^{-1}|^\frac{-(1-\beta)}{2(1-\alpha)} - \frac{1}{\beta-1}
\end{split}
\end{equation}
$|K_X|^\frac{-\alpha(1-\beta)}{2(1-\alpha)}  |K_Y|^\frac{-(1-\beta)}{2} $ is a positive constant, since $K_X$ and $K_Y$ are positive definite matrices. Hence, it could be removed from the optimization problem. Same argument holds for $|K_{X \cup x}|=|K_X| (k_X(x,x) - {K_X^x}^T {K_X}^{-1} {K_X^x})>0$, so $(k_X(x,x) - {K_X^x}^T {K_X}^{-1} {K_X^x})>0$ could be also removed from the cost function. Having removed these constants, the prediction function reduces to minimizing the following expression
\begin{equation}
\small
\begin{split}
L_{\alpha, \beta}(p(X,x) : p(Y,y)) = & \frac{1}{\beta-1} (k_Y(y,y) - {K_Y^y}^T {K_Y}^{-1} {K_Y^y})^\frac{-(1-\beta)}{2}     \cdot \\ &    |\alpha K_{X\cup x}^{-1} +(1-\alpha) {K_{Y \cup y}}^{-1}|^\frac{-(1-\beta)}{2(1-\alpha)} \\
\end{split}
\label{eq:lsmd1}
\end{equation}
It is worth mentioning that $K_{X\cup x}^{-1}$ is quadratic to compute, given that $K_X^{-1}$ is precomputed during the training; see Appendix A.

To avoid numerical instability problems in Equation~\ref{eq:lsmd1} (introduced by determinant of the large matrix $(\alpha K_{X\cup x}^{-1} +(1-\alpha) {K_{Y \cup y}}^{-1})$, we optimized $\log (L_{\alpha, \beta}(N_X : N_Y))$ instead of $L_{\alpha, \beta}(N_X : N_Y)$. We derived the gradient of $\log (L_{\alpha, \beta}(N_X : N_Y))$ by applying the matrix calculus directly on the logarithm of Equation~\ref{eq:lsmd1}, presented below; the  derivation steps are detailed in Appendix B
\begin{equation}
\small
\begin{split}
&\frac{\partial L_{\alpha, \beta}(p(X,x) : p(Y,y))}{\partial y^{(d)}}  = {(1-\beta)} \bigg[ \frac{ {K_Y^y}^T {K_Y}^{-1} \frac{\partial K_Y^y}{\partial y^{(d)}}}{(k_Y(y,y) - {K_Y^y}^T {K_Y}^{-1} {K_Y^y})}  +   \mu_y^T  \cdot  \frac{\partial {K_Y^y}}{\partial y^{(d)}} \bigg]  \\
\end{split}
\label{eq:gradL}
\end{equation}
\ignore{where $\Big(\alpha K_{Y \cup y} K_{X\cup x}^{-1} K_{Y \cup y} +(1-\alpha) {K_{Y \cup y}} \Big)\mu_y = [0,0,...0,1]^T$.}
$\mu_y$ is computed by solving the following linear system of equations \begin{math}
\small
 \Big(\alpha K_{Y \cup y} K_{X\cup x}^{-1}\end{math} \begin{math}
\small K_{Y \cup y} +(1-\alpha) {K_{Y \cup y}} \Big)\mu^{'}_y = [0,0,...0,1]^T
\end{math},  $\mu_y$ is the first $N$ elements in $\mu^{'}_y$, which is a vector of $N+1$ elements.
 The computational complexity of the gradient in Equation~\ref{eq:gradL}\ignore{ $\frac{\partial log L(\alpha,\beta)}{\partial y(d)}$} is cubic at test time, due to solving this system. On the other hand, the gradient for KLTGP is quadratic. This problem motivated us to investigate the cost function to achieve a quadratic complexity of the gradient computation for SMTGP.

\subsection{Quadratic SMTGP Prediction}
\label{sec:ISMTGP}
We start by simplifying the closed-form expression introduced in~\citep{SM:2012}, which led to the $O(N^3)$ gradient computation.

\begin{lem}
SM-divergence between two N-dimensional multivariate Gaussians $\mathcal{N}_p = \mathcal{N}(0, \Sigma_p)$ and $\mathcal{N}_q = \mathcal{N}(0, \Sigma_q)$ can be written as
\label{lemma11}
\end{lem}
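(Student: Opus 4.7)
The plan is to start from Nielsen's closed-form expression in Equation~\ref{eqcfsm1}, specialize it to the zero-mean case $\Delta\mu = 0$, and then eliminate the expensive term $|(\alpha\Sigma_p^{-1} + (1-\alpha)\Sigma_q^{-1})^{-1}|$ by rewriting the matrix whose determinant is being taken as a product of three matrices whose determinants are easier to handle. The target form should involve only the three determinants $|\Sigma_p|$, $|\Sigma_q|$, and $|\alpha\Sigma_q + (1-\alpha)\Sigma_p|$, which is what allows the subsequent gradient to be computed in quadratic rather than cubic time.

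First, I would use the elementary identity $|A^{-1}| = 1/|A|$ to flip the inverse inside the denominator, turning the problematic ratio into
\begin{equation*}
\frac{|\Sigma_p|^\alpha |\Sigma_q|^{1-\alpha}}{|(\alpha \Sigma_p^{-1} + (1-\alpha)\Sigma_q^{-1})^{-1}|} \;=\; |\Sigma_p|^\alpha |\Sigma_q|^{1-\alpha} \cdot |\alpha \Sigma_p^{-1} + (1-\alpha)\Sigma_q^{-1}|.
\end{equation*}
The key algebraic step is then the factorization
\begin{equation*}
\alpha \Sigma_p^{-1} + (1-\alpha)\Sigma_q^{-1} \;=\; \Sigma_p^{-1}\bigl(\alpha \Sigma_q + (1-\alpha)\Sigma_p\bigr)\Sigma_q^{-1},
\end{equation*}
which one can verify by multiplying out: the $\alpha$-term collapses to $\alpha\Sigma_p^{-1}\Sigma_q\Sigma_q^{-1} = \alpha\Sigma_p^{-1}$ and the $(1-\alpha)$-term to $(1-\alpha)\Sigma_p^{-1}\Sigma_p\Sigma_q^{-1} = (1-\alpha)\Sigma_q^{-1}$. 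Applying multiplicativity of the determinant gives
\begin{equation*}
|\alpha \Sigma_p^{-1} + (1-\alpha)\Sigma_q^{-1}| \;=\; \frac{|\alpha \Sigma_q + (1-\alpha)\Sigma_p|}{|\Sigma_p|\,|\Sigma_q|}.
\end{equation*}

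Substituting back and gathering the exponents of $|\Sigma_p|$ and $|\Sigma_q|$, the ratio inside Nielsen's bracket collapses to $|\alpha \Sigma_q + (1-\alpha)\Sigma_p| \cdot |\Sigma_p|^{\alpha-1} |\Sigma_q|^{-\alpha}$. Plugging this into Equation~\ref{eqcfsm1} with $\Delta\mu = 0$ yields the desired simplified expression, in which no inversion of the large combined matrix appears and only three determinants survive.

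There is no serious analytical obstacle here; the work is routine matrix algebra. The only subtle step is spotting the factorization of the sum of inverses as a sandwich of the single matrix $\alpha\Sigma_q + (1-\alpha)\Sigma_p$ between $\Sigma_p^{-1}$ and $\Sigma_q^{-1}$. One should also briefly note that the factorization is valid because $\Sigma_p$ and $\Sigma_q$ are positive definite (hence invertible), which is automatic in the Gaussian setting, so no additional hypothesis is needed beyond what the lemma already assumes. The payoff is entirely computational: when applied to the SMTGP cost in Equation~\ref{eq:lsmd1}, this rewriting removes the matrix inverse that was forcing the $O(N^3)$ gradient and enables the quadratic gradient formula that will be derived in the subsequent subsection.
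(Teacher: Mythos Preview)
Your proposal is correct and follows essentially the same route as the paper's own proof: both specialize Nielsen's formula to $\Delta\mu=0$, apply $|A^{-1}|=1/|A|$, use the identical factorization $\alpha\Sigma_p^{-1}+(1-\alpha)\Sigma_q^{-1}=\Sigma_p^{-1}\bigl(\alpha\Sigma_q+(1-\alpha)\Sigma_p\bigr)\Sigma_q^{-1}$, invoke multiplicativity of the determinant, and rearrange. Your version is slightly more explicit in verifying the factorization and noting that positive definiteness guarantees invertibility, but the argument is the same.
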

\begin{equation}
\small
\begin{split}
D'_{\alpha, \beta}&(\mathcal{N}_p:\mathcal{N}_q) = \frac{1}{\beta-1} \Bigg[ \Big(\frac{|\Sigma_p| ^{1-\alpha}|{\Sigma_q}|^\alpha}{  | \alpha \Sigma_q +(1-\alpha) { \Sigma_p} |}\Big)^{\frac{(1-\beta)}{2 (1-\alpha)}} -1   \Bigg]
\end{split}
\label{eq:Normsm2}
\end{equation}

\begin{proof}
Under TGP setting,  the exponential term in Equation~\ref{eqcfsm1} vanishes to 1, since $\Delta \mu =0$ (i.e. $\mu_p = \mu_q=0$). Then, $\frac{|\Sigma_p|^{\alpha} {|\Sigma_q}|^{1 -\alpha}}{|(\alpha \Sigma_p^{-1} +(1-\alpha) {\Sigma_q}^{-1})^{-1}|}$ could be simplified as follows:
\begin{equation}
\small
\begin{split}
&= \frac{{|\Sigma_p|}^{\alpha} {|\Sigma_q|}^{1 -\alpha}}{|(\alpha \Sigma_p^{-1} +(1-\alpha) {\Sigma_q}^{-1})|^{-1}}\text{, since }|A^{-1}|= \frac{1}{|A|}\\
&= \frac{{|\Sigma_p|}^{\alpha} {|\Sigma_q|}^{1 -\alpha}}{|\Sigma_p^{-1} (\alpha \Sigma_q +(1-\alpha) { \Sigma_p} ){\Sigma_q^{-1}|^{-1}}} \text{, by factorization}\\
&= \frac{{|\Sigma_p|}^{\alpha} {|\Sigma_q|}^{1 -\alpha}}{|\Sigma_p  || \alpha \Sigma_q +(1-\alpha) { \Sigma_p}|^{-1}|{\Sigma_q}|}\text{, since }|A B| =  |A| |B| \\
&= \frac{|\alpha \Sigma_q +(1-\alpha) {\Sigma_p} |}{{|\Sigma_p| }^{1-\alpha}{|\Sigma_q|}^\alpha} \text{, by rearrangement}
\end{split}
\label{eq:ncfe} 
\end{equation}
\end{proof}

We denote the original closed-form expression as $D_{\alpha,\beta}(\mathcal{N}_p, \mathcal{N}_q)$, while the simplified form $D'_{\alpha,\beta}(\mathcal{N}_p, \mathcal{N}_q)$\ignore{, where $\Delta \mu =0$}. After applying the simplified SM expression in Lemma~\ref{lemma11} to measure the divergence between $p(X,x)$ and $p(Y,y)$, the new cost function becomes in the following form
\begin{equation}
\small
\begin{split}
D'_{\alpha, \beta}(p(X,x) : p(Y,y))& =  \frac{1}{\beta-1} \Big[\big(\frac{|K_{X\cup x}|^{1-\alpha} {|K_{Y \cup y}|}^{\alpha}}{|(1-\alpha) K_{X\cup x} + \alpha {K_{Y \cup y}}|})^{\frac{1-\beta}{2(1-\alpha)}} -1 \Big]
\\  =  \frac{1}{\beta-1} \big(&{|K_{X\cup x}|^\frac{1-\beta}{2}  {|K_{Y \cup y}|}^\frac{\alpha (1-\beta)}{2(1- \alpha)}} {|(1-\alpha) K_{X\cup x} + \alpha {K_{Y \cup y}}|^\frac{-(1-\beta)}{2 (1-\alpha)}}) - \frac{1}{\beta-1}, \\
{|K_{Y \cup y}|}^\frac{\alpha (1-\beta)}{1-\alpha}  = & {|K_{Y}|}^\frac{\alpha (1-\beta)}{(1-\alpha)}  \cdot   (k_Y(y,y) - {K_Y^y}^T {K_Y}^{-1} {K_Y^y})^\frac{\alpha (1-\beta)}{(1-\alpha)} , \\
|(1-\alpha) K_{X\cup x} +& \alpha {K_{Y \cup y}}|^\frac{-(1-\beta)}{2 (1-\alpha)} = |(1-\alpha) K_{X} + \alpha {K_{Y}}|^\frac{-(1-\beta)}{2 (1-\alpha)} \cdot \\ & {({K_{xy}}^{\alpha} - {K_{XY}^{xy}}^T ((1-\alpha) K_X + \alpha K_Y)^{-1} {K_{XY}^{xy}})}^{\frac{-(1-\beta)}{2(1-\alpha)}} 
\end{split}
\label{eq:sm2TGPcost}
\end{equation}
where ${K_{xy}}^{\alpha} = (1-\alpha) k_X(x,x) + \alpha k_Y(y,y)$, $K_{XY}^{xy} = (1-\alpha) K_X^x+  \alpha K_Y^y$. Since $|K_{X\cup x}|^\frac{1-\beta}{2}$,  ${|K_{Y}|}^\frac{\alpha (1-\beta)}{(1-\alpha)}$ , and $|(1-\alpha) K_{X} + \alpha {K_{Y}}|^\frac{-(1-\beta)}{2 (1-\alpha)}$  are  multiplicative positive constants that do not depend on $y$, they can be dropped from the cost function. Also,  $- \frac{1}{\beta-1}$ is an additive constant that can be ignored under optimization. After ignoring these multiplicative positive constants and the added constant, the improved SMTGP cost function reduces to

\begin{equation}
\small
\begin{split}
 L'_{\alpha, \beta}(p(X,x) : p(Y,y)) =  &\frac{1}{\beta-1} \Big[ {(k_Y(y,y) - {K_Y^y}^T {K_Y}^{-1} {K_Y^y})^\frac{\alpha(1 -\beta)}{2(1-\alpha)}} \cdot \\
&({{{K_{xy}}^{\alpha} - {K_{XY}^{xy}}^T ((1-\alpha) K_X + \alpha K_Y)^{-1} {K_{XY}^{xy}})}^{\frac{-(1-\beta)}{2(1-\alpha)}} } \Big]
\end{split}
\label{eq:sm2tgp1}
\end{equation}
In contrast to \ignore{$L_{\alpha, \beta}(p(X,x) : p(Y,y))$}$L_{\alpha, \beta}$ in Equation~\ref{eq:lsmd1}, \ignore{$L'_{\alpha, \beta}(p(X,x) : p(Y,y))$} $L'_{\alpha, \beta}$ does not involve a determinant of a large matrix. Hence, we predict the output $y$ by  directly\footnote{There is no need to optimize over the logarithm of \ignore{$L'_{\alpha, \beta}(p(X,x):p(Y,y))$}$L'_{\alpha, \beta}$ because there is no numerical stability problem} minimizing \ignore{$L'_{\alpha, \beta}(p(X,x) : p(Y,y))$}$L'_{\alpha, \beta}$ in Equation~\ref{eq:sm2tgp1}. Since the cost function has two factors that does depend on $y$, we follow the rule that if $g(y) = c \cdot f(y) \cdot r(y)$ where $c$ is a constant,  $f(y)$ and  $r(y)$ are functions, then $\frac{\partial g(y)}{\partial y} =   c \cdot ( \frac{\partial f(y)}{\partial y} r(y) + f(y)\frac{\partial r(y)}{\partial y} )$ 
 which interprets the two terms of the derived gradient below, where $f(y) = {(k_Y(y,y) - {K_Y^y}^T {K_Y}^{-1} {K_Y^y})^\frac{\alpha (1 -\beta)}{2 (1- \alpha)}}$, $r(y) = ({{K_{xy}}^{\alpha} - {K_{XY}^{xy}}^T ((1-\alpha) K_X + \alpha K_Y)^{-1} {K_{XY}^{xy}})}^{\frac{-(1-\beta)}{2(1-\alpha)}}$, $c = \frac{1}{\beta -1}$\ignore{Following matrix calculus, the analytic gradient could be computed as follows.}
\begin{equation}
\small
\begin{split}
\frac{\partial L'(\alpha,\beta)}{\partial y^{(d)}}  (p(X,x) : p(Y,y))=&  \frac{1}{\beta-1} \Big[ {\frac{\alpha (1 -\beta)}{2 (1-\alpha)} (k_Y(y,y) - {K_Y^y}^T {K_Y}^{-1} {K_Y^y})^{\frac{\alpha (1 -\beta)}{2 (1-\alpha)} -1}} \cdot \\& {(\frac{\partial k_Y(y,y)}{\partial y^{(d)}} - 2 \cdot {K_Y^y}^T {K_Y}^{-1} \frac{\partial K_Y^y}{\partial y^{(d)}})}  \cdot\\
&({{{K_{xy}}^{\alpha} - {K_{XY}^{xy}}^T ((1-\alpha) K_X + \alpha K_Y)^{-1} {K_{XY}^{xy}})}^{\frac{-(1-\beta)}{2(1-\alpha)}} } +\\
&{(k_Y(y,y) - {K_Y^y}^T {K_Y}^{-1} {K_Y^y})^\frac{\alpha (1 -\beta)}{2 (1-\alpha)}} \cdot \frac{-(1-\beta)}{2(1-\alpha)}\\ & ( {{{K_{xy}}^{\alpha} - {K_{XY}^{xy}}^T ((1-\alpha) K_X + \alpha K_Y)^{-1} {K_{XY}^{xy}})}^{\frac{-(1-\beta)}{2(1-\alpha)} -1}  }\cdot \\
&(\alpha \frac{\partial  k_Y(y,y)}{\partial y^{(d)}} - 2 \cdot {K_{XY}^{xy}}^T ((1-\alpha) K_X + \alpha K_Y)^{-1} \cdot \alpha \frac{\partial K_Y^y}{\partial y^{(d)}}) \Big]
\end{split}
\label{eqgradsmd11}
\end{equation}

The computational complexity of the cost function in Equation~\ref{eq:sm2tgp1} and the gradient in Equation~\ref{eqgradsmd11}  is quadratic at test time  (i.e. $O(N^2)$) on number of the training data.  Since ${K_Y}^{-1}$ and $(\alpha K_X + (1-\alpha) K_Y)^{-1}$ depend only on the training points, they are precomputed in the training time. Hence, our hypothesis, about the quadratic computational complexity of improved SMTGP prediction function and gradient, is true since the remaining computations are $O(N^2)$. This indicates the advantage of using our closed-form expression for SM divergence in lemma~\ref{lemma11} against the closed-form proposed in~\citep{SM:2012} with cubic complexity. However, both expression are equivalent, it is straight forward to compute the gradient in quadratic complexity from $D'(\alpha,\beta)$ expression. 
\ignore{The derivation steps of $\frac{\partial L'(\alpha,\beta)}{\partial y^{(d)}} $ are detailed in Appendix C.}

\subsection{Advantage of $D'_{\alpha,\beta}(\mathcal{N}_p, \mathcal{N}_q)$ against $D_{\alpha,\beta}(\mathcal{N}_p, \mathcal{N}_q)$ out of SMTGP context} 
\label{ss:ncfa}

\ignore{In contrast to $D_{\alpha,\beta}(\mathcal{N}_p, \mathcal{N}_q)$, $D'_{\alpha,\beta}(\mathcal{N}_p, \mathcal{N}_q)$ in lemma~\ref{lemma11} could be computed similarly in $N^3$ operations, if $\Delta \mu =0$, required to compute the determinants of $\Sigma_p$, $\Sigma_q$, and $\alpha \Sigma_q +(1-\alpha) \Sigma_p$ by Cholesky decomposition. This } The previous subsection shows that the computational complexity of SMTGP prediction was decreased significantly using our $D'_{\alpha,\beta}$  at test time to be quadratic, compared to cubic complexity for  $D_{\alpha,\beta}$. Out of the TGP\ignore{using the SMTGP} context,  we show here another general advantage of using our proposed closed-form expression to generally compute SM-divergence between two Gaussian distributions $\mathcal{N}_p$ and $\mathcal{N}_q$. 
\ignore{It is worth mentioning that } $D'_{\alpha,\beta}(\mathcal{N}_p, \mathcal{N}_q)$ 
is $1.67$ times faster to compute than $D_{\alpha,\beta}(\mathcal{N}_p, \mathcal{N}_q)$ under $\Delta \mu =0$ condition. This is since $D'_{\alpha,\beta}(\mathcal{N}_p, \mathcal{N}_q)$ needs $N^3$ operations which is much less than $5 N^3/3$   operations needed to compute $D'_{\alpha,\beta}(\mathcal{N}_p, \mathcal{N}_q)$(\ie requires less matrix operations);  see Appendix C for the proof.\ignore{
 3 determinant computations  and 2 matrix inversions, if $\Delta \mu = 0$. However, equation~\ref{eq:ncfe}. requires only 3 matrix determinant computations under the same condition. This simplification could be used to efficiently compute SM divergence between two Gaussian Distributions, out of the context of TGPs. 
} We conclude this section by a general form of Lemma~\ref{lemma11} in Equation~\ref{eq:Normsm2Gen}, where $\Delta \mu \neq 0$. This equation was achieved by refactorizing the exponential term and using matrix identities.

 \begin{equation}
\small
\begin{split}
D'_{\alpha, \beta}(\mathcal{N}_p:\mathcal{N}_q) =  \frac{1}{\beta-1} \Bigg[ &\Big(\frac{|\Sigma_p| ^{1-\alpha}|{\Sigma_q}|^\alpha}{  | \alpha \Sigma_q +(1-\alpha) { \Sigma_p} |}\Big)^{\frac{(1-\beta)}{2 (1-\alpha)}}\cdot \\ & e^{-\frac{\alpha (1-\beta)}{2}   \Delta\mu^T \Sigma_q (\alpha {\Sigma_q} +(1-\alpha) {\Sigma_q})^{-1} \Sigma_p \Delta\mu } -1   \Bigg]
\end{split}
\label{eq:Normsm2Gen}
\end{equation}

\ignore{
 \textbf{Note on Lemma~\ref{lemma11}:} }
 \ignore{In addition to the $N^3$ operations needed to compute  $D'_{\alpha,\beta}(\mathcal{N}_p, \mathcal{N}_q)$ under $\Delta \mu \neq 0$, $\frac{N^3}{3}$ operations are needed to compute $(\alpha {\Sigma_q} +(1-\alpha) {\Sigma_q})^{-1}$ \footnote{Additional $2 \cdot {O(N^{2.33})}$ for 2 matrix multiplications are ignored}. So, total of $\frac{4 N^3}{3}$ operations are needed if $\Delta \mu \neq 0$,}
 
\noindent In case $\Delta \mu \neq 0$,  $D'_{\alpha,\beta}(\mathcal{N}_p, \mathcal{N}_q)$ is $1.5$ times faster than computing  $D_{\alpha,\beta}(\mathcal{N}_p, \mathcal{N}_q)$. This is since $D'_{\alpha,\beta}(\mathcal{N}_p, \mathcal{N}_q)$ needs $4 N^3 /3$ operations in this case which is less than $2 N^3$  operations needed to compute $D'_{\alpha,\beta}(\mathcal{N}_p, \mathcal{N}_q)$ under $\Delta \mu \neq 0$; see Appendix C. This indicates that the  simplifications, we provided in this work, could be used to generally speedup the computation of SM divergence between two Gaussian Distributions, beyond the context of TGPs. \ignore{However, under the  SMTGP cost function, the computational complexity was decreased significantly in test time to be quadratic, as shown in the previous subsection. }

\section{Theoretical Analysis}
\label{sec:eiganalsis}

In order to understand the role of $\alpha$ and $\beta$ parameters of SMTGP, we performed an eigen analysis of the cost function in Equation~\ref{eq:sm2TGPcost}.  Generally speaking, the basic notion of TGP prediction, is to extend the dimensionality of the divergence measure from $N$ training examples to $N+1$ examples, which involves the test point $x$ and the unknown output $y$. Hence, we start by discussing the extension of a general Gaussian Process from $K_{Z}$  (e.g. $K_X$ and $K_Y$) to $K_{Z \cup z}$ (e.g. $K_{X \cup x}$ and $K_{Y \cup y}$), where $Z$ is any domain and $z$ is the point that extends $K_Z$ to $K_{Z \cup z}$, detailed in subsection~\ref{sec:ea}. Based on this discussion, we will derive two lemmas to address some properties of the SMTGP prediction in Subsection~\ref{sec:tgpcostanalysis}, which will lead to a probabilistic interpretation that we provide in subsection~\ref{sec:pi}.

\subsection{A Gaussian Process from $N$ to $N+1$ points}
\label{sec:ea}
In this section, we will use a superscript to disambiguate between the kernel matrix of size $N$ and $N+1$, i.e. $K^N$ and $K^{N+1}$.
Let $f(\mathbf{z}) = \mathcal{GP}(m({z})=0,$ $k(\mathbf{z}, \mathbf{z}'))$ be a  Gaussian process on an arbitrary domain $Z$. Let $GP^N =  \mathcal{N}({0}, K^N)$ be the marginalization of the given Gaussian process over the $N$ training points (i.e. $\{{z}_i\}, i = 1:N$). Let $GP^{N+1} = \mathcal{N}(0, K^{N+1})$ be the extension of the $GP^N$ be the marginalization of $f({z})$ over $N+1$ points after adding the ${N+1}^{th}$ point (i.e. ${z}$)\footnote{This is linked to the extending $p(X)$  to $p(X,x)$ and $p(Y)$ to $p(Y,y)$  by $x$ and $y$ respectively}. 
The kernel matrix $K^{N+1}$ is written in terms of $K^{N}$ as follows
\begin{equation}
\small
  K^{N+1} = \begin{bmatrix}
  K^N  & v \\
  v^T & k(z,z)
  \end{bmatrix}
\end{equation}
where $v = [k(z,z_1) \cdots k(z,z_N)]^T$. 
The matrix determinant of $K^{N+1}$ is related to $K^{N}$ by
 \begin{equation}
 \small
 \begin{split}
|K^{N+1}| = &  \eta \cdot |K^{N}| , \,\,\,\,\,\,\,\,\,\,\,\,
\eta  =  k(z,z) - v^T (K^{N})^{-1}  v.
\end{split}
\label{eq:eta}
\end{equation} 
Since multivariate Gaussian distribution is a special case of the elliptical distributions, the eigen values of any covariance matrix (e.g. $K^{N}, K^{N+1}$ )  are interpreted  as variance of the distribution in the direction of the corresponding eigen vectors. Hence, the determinant of the matrix (e.g. $|K^{N}|, |K^{N+1}|$) generalizes the notion of the variance in multiple dimensions as the volume of  this elliptical distribution, which is oriented by the eigen vectors. From this notion, one could interpret  $\eta$ as the ratio by which the variance (uncertainty) of the marginalized Gaussian process is scaled, introduced by the new data point $z$. Looking closely at $\eta$,  we can  notice

(1) $0 \le \eta \le k(z,z)$,  since $|K^{N}|>0$, $|K^{N+1}|>0$, and $v^T (K^{N})^{-1}  v \geq 0$.

(2)  In the case of the regularized Gaussian kernel, we used in our work,  $k(z,z) = 1 +\lambda$, and hence $0 \le \eta \le 1+\lambda$

(3) $\eta$  decreases as the new data point get closer to the $N$ points.  This situation makes $v$  highly correlated with the eigen vectors of small eigen values of $K^{N}$, since the term $v^T (K^{N})^{-1}  v $ is maximized as $v$ points to the smallest principal component of $K^{N}$ (i.e. the direction of the maximum certainty). Hence, $\eta$ is an uncertainty measure, which is minimized as the new data point $z$  produces a vector $v$, that maximizes the certainty of the data under $\mathcal{N}(0, K^N)$, which could be thought as a measurement proportional to $1/ p(z| z_{1}:z_{N})$. Computing  $\eta$ on the input space $X$ makes it equivalent to the predictive variance of Gaussian Process Regression (GPR) prediction~\citep{Rasmussen:2005} (Chapter 2), which depends only on the input space. However, we are discussing $\eta$ as an uncertainty extension from $N$ to $N+1$ on an arbitrary domain, which is beneficial for SMTGP analysis that follows.
\ignore{
This interpretation of $\eta_{\mathbf{z}_{N+1}}$ is the central notion, upon which we build our analysis of Twin Gaussian Processes in the next subsection.
}

\subsection{TGP Cost Function Analysis}
\label{sec:tgpcostanalysis}
We start by the  optimization function of the SMTGP prediction, defined as  
\begin{equation}
\small
\begin{split}
\hat{y}&(\alpha,\beta) =  \underset{y}{\operatorname{argmin}} \Big[ D'_{\alpha, \beta}(GP_{X\cup x} : GP_{Y\cup y}) =  \frac{1}{\beta-1} \Big(\big(\frac{|K_{X\cup x}|^{1-\alpha} {|K_{Y \cup y}|}^{\alpha}}{|(1-\alpha) K_{X\cup x} + \alpha {K_{Y \cup y}}|})^{\frac{1-\beta}{2(1-\alpha)}} -1)\Big) \Big]  
\end{split}
\label{eq:asma}
\end{equation}
where $D'_{\alpha, \beta}(\cdot,\cdot)$ is as defined in Equation~\ref{eq:sm2TGPcost}.
As detailed in Section~\ref{sec:smTGP}, SM divergence,  involves the determinant of three matrices of size $N+1 \times N+1$, namely $K_{X\cup x}$,  $K_{Y\cup y}$, and $ \alpha K_{Y \cup y} +(1-\alpha) K_{X\cup x} $. Hence,  We have three uncertainty extensions from  $N$ to $N+1$, as follows
\begin{equation}
\small
\begin{split}
|K_{X\cup x}|= \eta_x  \cdot |K_X| , \,\,\,\,\,\,\,\,\,\, |K_{Y\cup y}| = &\eta_y \cdot |K_Y| \\
 |\alpha K_{Y \cup y} +(1-\alpha)  K_{X\cup x}|=  \eta_{x,y}(\alpha) \cdot |&\alpha K_Y +(1-\alpha) K_X| , \\ 
\text{where } \eta_{x,y}(\alpha) = \alpha K_Y(y,y) +(1-\alpha) K_X(x,x) - v_{xy}(\alpha)^T &(\alpha K_Y +(1-\alpha) K_X)^{-1} v_{xy}(\alpha), \\
v_{xy}(\alpha) = \alpha K_Y^y +(1-\alpha) K_X^x &\\
\end{split}
\label{eq:3ext}
\end{equation}
It might not be straightforward to think about $ \alpha K_{Y \cup y} +(1-\alpha) { K_{X\cup x}} $ within TGP formulation as a kernel matrix defined on $X \times Y$ space in Equation~\ref{eq:3ext}. This gives an interpretation of the constraint that $0 \leq \alpha \leq1$ in Equation~\ref{eqcfsm1} and~\ref{eq:Normsm2}. Since  $\alpha k_Y(y_i, y_j) + (1-\alpha) k_X(x_i, x_j)$ is a weighted sum of valid kernels with positive weights, then $ \alpha K_{Y \cup y} +(1-\alpha) { K_{X\cup x}} $ is a valid kernel matrix  on $X \times Y$ space. From Equation~\ref{eq:asma} and~\ref{eq:3ext}, we derived with the following two Lemmas. 
\begin{lem}
Under SMTGP, $\varphi_{\alpha}(x,y) = \frac{  \eta_x^{1-\alpha}  \eta_y^{\alpha}}{\eta_{x,y}(\alpha)}\le (\int_{-\infty}^{\infty}{p_{Y}(t)^\alpha p_{X}(t)^{1-\alpha} dt} )^{-2}$, $(\int_{-\infty}^{\infty}{p_{Y}(t)^\alpha p_{X}(t)^{1-\alpha} dt} )^{-2} =  \frac{  | \alpha K_{Y } +(1-\alpha) { K_{X}} | }{|K_{X}| ^{1-\alpha} |{K_{Y}}|^\alpha }  \ \ge 1$
\label{lemma1}
\end{lem}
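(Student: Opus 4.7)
My plan is to derive the entire lemma from a single observation: the expression $|K_{X\cup x}|^{1-\alpha} |K_{Y \cup y}|^\alpha / |\alpha K_{Y \cup y} + (1-\alpha) K_{X\cup x}|$ that sits inside the SMTGP cost (Equation~\ref{eq:sm2TGPcost}) is exactly the square of the generalized $\alpha$-coefficient $\int p(X,x)^\alpha p(Y,y)^{1-\alpha}\,dt$ between the two extended zero-mean Gaussian processes, and this coefficient is bounded above by $1$ via H\"older's inequality. Each of the three claims in the lemma — the closed-form identity, the lower bound of $1$, and the upper bound on $\varphi_\alpha(x,y)$ — then falls out of this observation applied either to the base covariances or to their $(N{+}1)$-point extensions.

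First I would establish the closed-form identity for the integral. Since $p_X = \mathcal{N}(0, K_X)$ and $p_Y = \mathcal{N}(0, K_Y)$ are zero-mean Gaussians, the integrand $p_X(t)^\alpha p_Y(t)^{1-\alpha}$ is an unnormalized Gaussian in $t$ with precision $\alpha K_X^{-1} + (1-\alpha) K_Y^{-1}$. Using exactly the factorization from the proof of Lemma~\ref{lemma11}, namely $\alpha K_X^{-1} + (1-\alpha) K_Y^{-1} = K_X^{-1}[\alpha K_Y + (1-\alpha) K_X] K_Y^{-1}$, the determinant of the precision collapses to $|\alpha K_Y + (1-\alpha) K_X| / (|K_X||K_Y|)$. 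Carrying out the standard Gaussian integration, then squaring and inverting, produces $(\int)^{-2} = |\alpha K_Y + (1-\alpha) K_X| / (|K_X|^{1-\alpha} |K_Y|^\alpha)$. The lower bound $(\int)^{-2} \geq 1$ is then immediate from H\"older's inequality with conjugate exponents $1/\alpha$ and $1/(1-\alpha)$: $\int p_X^\alpha p_Y^{1-\alpha}\,dt \leq (\int p_X)^\alpha (\int p_Y)^{1-\alpha} = 1$, with equality iff $p_X = p_Y$ almost everywhere.

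Next I would obtain the upper bound on $\varphi_\alpha(x,y)$ by reapplying exactly the same identity, but now to the extended distributions $p(X,x)$ and $p(Y,y)$ whose covariances are the $(N{+}1)\times(N{+}1)$ matrices $K_{X\cup x}$ and $K_{Y\cup y}$. The H\"older bound in dimension $N{+}1$ gives $\bigl(\int p(X,x)^\alpha p(Y,y)^{1-\alpha}\,dt\bigr)^2 \leq 1$, so the extended-dimension form of the identity yields $|K_{X\cup x}|^{1-\alpha} |K_{Y\cup y}|^\alpha / |\alpha K_{Y\cup y} + (1-\alpha) K_{X\cup x}| \leq 1$. Substituting the three uncertainty-extension relations from Equation~\ref{eq:3ext} factors the left-hand side exactly as $\varphi_\alpha(x,y) \cdot |K_X|^{1-\alpha} |K_Y|^\alpha / |\alpha K_Y + (1-\alpha) K_X|$, and rearranging gives $\varphi_\alpha(x,y) \leq |\alpha K_Y + (1-\alpha) K_X| / (|K_X|^{1-\alpha} |K_Y|^\alpha)$, which is the target inequality.

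The only real obstacle is bookkeeping the $\alpha$ versus $1-\alpha$ exponents and the ordering of $K_X$ and $K_Y$ in the factorization, so that the extension relations of Equation~\ref{eq:3ext} line up cleanly with the extended-dimension H\"older bound. Once both sides are written in the common form $\varphi_\alpha(x,y) \cdot (\text{base-covariance ratio})$, the inequality reduces to the simple fact that the $\alpha$-coefficient remains at most $1$ after extending the two Gaussian processes from $N$ to $N{+}1$ points.
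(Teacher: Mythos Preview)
Your proposal is correct and follows essentially the same route as the paper's proof: identify $\int p^\alpha q^{1-\alpha}\,dt$ with the determinant ratio, use the bound $\int p^\alpha q^{1-\alpha}\,dt \le 1$, apply this both at the $N$-point and $(N{+}1)$-point levels, and factor the extended ratio via the $\eta$-relations of Equation~\ref{eq:3ext}. The only cosmetic difference is that the paper reads off the integral identity by matching the general SM definition (Equation~\ref{eq:smgen}) against the already-derived Gaussian closed form, whereas you recompute the Gaussian integral directly and name H\"older's inequality explicitly as the source of the $\le 1$ bound; the paper simply asserts the latter.
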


\begin{proof}
Directly from the definition of SM TGP in Equation~\ref{eq:asma} and~\ref{eq:3ext}, SM TGP cost function could be written as, 
{\small
\begin{equation}
\begin{split}
&\hat{y}(\alpha,\beta) =   \underset{y}{\operatorname{argmin}} \Bigg [D'_{\alpha, \beta}(p(X,x) : p(Y,y))  = \frac{1}{\beta-1} \Bigg( \Big(\frac{|K_{X}| ^{1-\alpha} . \eta_x^{1-\alpha} .|{K_{Y}}|^\alpha . \eta_y^{\alpha}}{  | \alpha K_{Y } +(1-\alpha) { K_{X}} | .  \eta_{x,y}(\alpha)}\Big)^{\frac{(1-\beta)}{2 (1-\alpha)}} -1   \Bigg)\Bigg]
\end{split}
\label{eq:asm}
\end{equation}}
Comparing Equation~\ref{eq:smgen} to Equation~\ref{eq:asm}, then 

\begin{equation*}
\Big(\frac{|K_{X}| ^{1-\alpha}| . \eta_x^{1-\alpha} .{K_{Y}}|^\alpha . \eta_y^{\alpha}}{  | \alpha K_{Y } +(1-\alpha) { K_{X}} | .  \eta_{x,y}(\alpha)}\Big)^\frac{1}{2}    = \int_{-\infty}^{\infty}{p_{X,x}(t)^\alpha p_{Y,y}(t)^{1-\alpha} dt} \le1,
\end{equation*}
and since  $\eta_{x,y}(\alpha)>0$ and  $\eta_y>0$, then 

\begin{equation*}\varphi_{\alpha}(x,y) = \frac{\eta_x^{1-\alpha}  \eta_y^{\alpha}}{ \eta_{x,y}(\alpha)} \le \frac{  | \alpha K_{Y } +(1-\alpha) { K_{X}} | }{|K_{X}| ^{1-\alpha} |{K_{Y}}|^\alpha }, \end{equation*}
and since  $\int_{-\infty}^{\infty}{p_X(t)^\alpha p_Y(t)^{1-\alpha} dt} = \big(\frac{|K_{X}| ^{1-\alpha} |{K_{Y}}|^\alpha }{  | \alpha K_{Y } +(1-\alpha) { K_{X}} | } \big)^{\frac{1}{2}}$ and 

$\int_{-\infty}^{\infty}{p_X(t)^\alpha p_Y(t)^{1-\alpha}} dt \le 1$, then $\frac{|K_{X}| ^{1-\alpha} |{K_{Y}}|^\alpha }{  | \alpha K_{Y } +(1-\alpha) { K_{X}} | } \le 1$.
\end{proof}
\ignore{
\begin{lem}
Under SMTGP, $\hat{y}(\alpha, 1-\tau) = \hat{y}(\alpha, 1+\zeta)$, $0 < \alpha < 1$, $\tau>0$, $\zeta>0$, and both are achieved by maximizing  $(\int_{-\infty}^{\infty}{p_{(Y,y)}(t)^\alpha p_{(X,x)}(t)^{1-\alpha} dt} )^2  =  \frac{|K_{X}| ^{1-\alpha} |{K_{Y}}|^\alpha  \eta_x^{1-\alpha}  \eta_y^\alpha}{  | \alpha K_{Y } +(1-\alpha) { K_{X}} |  \eta_{x,y}{\alpha}} \le 1$,  which is $\propto  \varphi_{\alpha}(x,y)$, where $\zeta$ and $\tau$ controls whether to maximize $\varphi_{\alpha}(x,y)^\frac{\tau}{1-\alpha}$ or maximize $-\varphi_{\alpha}(x,y)^\frac{-\zeta}{1-\alpha}$, where $p{(X)} = \mathcal{N}(0, K_{X})$ and $p{(Y)} = \mathcal{N}(0, K_{Y})$, $p_{(X,x)} = \mathcal{N}(0, K_{X \cup x})$ and $p_{(Y,y)} = \mathcal{N}(0, K_{Y \cup y}).$. Hence,  $\hat{y}_(\alpha, \beta)$ maximizes $\frac{ \eta_x^{1-\alpha} . \eta_y^\alpha}{\eta_{x,y}(\alpha)} \le \frac{  | \alpha K_{Y } +(1-\alpha) { K_{X}} | }{|K_{X}| ^{1-\alpha} |{K_{Y}}|^\alpha }$ and it does not depend on $\beta$ theoretically. 
\label{lemma2}
\end{lem}}

\ignore{
\begin{lem}
Under SMTGP, $\hat{y}(\alpha, 1-\tau) = \hat{y}(\alpha, 1+\zeta)$, $0 < \alpha < 1$, $\tau>0$, $\zeta>0$.  Both predictions are achieved by maximizing  $(\int_{-\infty}^{\infty}{p_{(Y,y)}(t)^\alpha p_{(X,x)}(t)^{1-\alpha} dt} )^2  =  \frac{|K_{X}| ^{1-\alpha} |{K_{Y}}|^\alpha  \eta_x^{1-\alpha}  \eta_y^\alpha}{  | \alpha K_{Y } +(1-\alpha) { K_{X}} |  \eta_{x,y}{\alpha}} \le 1$,  which is $\propto  \varphi_{\alpha}(x,y)$, where $p{(X)} = \mathcal{N}(0, K_{X})$ and $p{(Y)} = \mathcal{N}(0, K_{Y})$, $p_{(X,x)} = \mathcal{N}(0, K_{X \cup x})$ and $p{(Y,y)} = \mathcal{N}(0, K_{Y \cup y})$. Hence,  $\hat{y}(\alpha, \beta)$ maximizes $\frac{ \eta_x^{1-\alpha} . \eta_y^\alpha}{\eta_{x,y}(\alpha)} \le \frac{  | \alpha K_{Y } +(1-\alpha) { K_{X}} | }{|K_{X}| ^{1-\alpha} |{K_{Y}}|^\alpha }$ and it does not depend on $\beta$ theoretically. 
\label{lemma2}
\end{lem}}

\begin{lem}
Under SMTGP and  $0 < \alpha < 1$, $\hat{y}(\alpha, \beta)$ maximizes $ \varphi_{\alpha}(x,y) = \frac{ \eta_x^{1-\alpha} . \eta_y^\alpha}{\eta_{x,y}(\alpha)} \le \frac{  | \alpha K_{Y } +(1-\alpha) { K_{X}} | }{|K_{X}| ^{1-\alpha} |{K_{Y}}|^\alpha }$ and it does not depend on $\beta$ theoretically. 
\label{lemma2}
\end{lem}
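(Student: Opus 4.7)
The plan is to start from the SMTGP cost function as expressed in Equation~\ref{eq:asm} inside the proof of Lemma~\ref{lemma1}, isolate the only factor that depends on $y$, and show that minimizing the full cost reduces to maximizing $\varphi_\alpha(x,y)$ irrespective of the sign of $\beta-1$. First I would write
\begin{equation*}
\small
D'_{\alpha,\beta}(p(X,x):p(Y,y)) \;=\; \frac{1}{\beta-1}\Big[\big(C_{\alpha}\cdot \varphi_\alpha(x,y)\big)^{\frac{1-\beta}{2(1-\alpha)}} - 1\Big],
\end{equation*}
where $C_\alpha = \frac{|K_X|^{1-\alpha}|K_Y|^\alpha}{|\alpha K_Y + (1-\alpha)K_X|}$ is a strictly positive constant (independent of $y$) by the last inequality of Lemma~\ref{lemma1}, and $\varphi_\alpha(x,y)=\frac{\eta_x^{1-\alpha}\eta_y^\alpha}{\eta_{x,y}(\alpha)}$ is the only $y$-dependent piece. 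The additive $-1$ and the $y$-independent factor $C_\alpha^{\frac{1-\beta}{2(1-\alpha)}}$ can be dropped from the $\arg\min_y$, so the optimization collapses to minimizing $\frac{1}{\beta-1}\,\varphi_\alpha(x,y)^{\frac{1-\beta}{2(1-\alpha)}}$ over $y$.

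Next I would handle the two regimes of $\beta$ separately. Since $0<\alpha<1$, the factor $\frac{1}{2(1-\alpha)}$ is a positive constant, so the sign of the exponent $\frac{1-\beta}{2(1-\alpha)}$ is simply the sign of $1-\beta$. Case~(i): if $\beta>1$, then $\frac{1}{\beta-1}>0$ while the exponent is negative, so $\arg\min_y \frac{1}{\beta-1}\varphi_\alpha(x,y)^{\text{(neg.)}}$ is achieved by \emph{maximizing} $\varphi_\alpha(x,y)$ (because $t\mapsto t^{\text{(neg.)}}$ is strictly decreasing for $t>0$ and the outer coefficient is positive). Case~(ii): if $\beta<1$, then $\frac{1}{\beta-1}<0$ while the exponent is positive, so $\arg\min_y \frac{1}{\beta-1}\varphi_\alpha(x,y)^{\text{(pos.)}}$ again amounts to maximizing $\varphi_\alpha(x,y)$ (negative coefficient times an increasing function). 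Both cases yield the same optimizer, which proves the $\beta$-independence.

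Finally, the upper bound $\varphi_\alpha(x,y)\le \frac{|\alpha K_Y + (1-\alpha)K_X|}{|K_X|^{1-\alpha}|K_Y|^\alpha}$ is exactly the first inequality established in Lemma~\ref{lemma1}, so no additional work is needed there. The main obstacle I anticipate is the sign-bookkeeping in the case analysis: one must combine the sign of $\frac{1}{\beta-1}$ with the sign of the exponent $\frac{1-\beta}{2(1-\alpha)}$ and the monotonicity of $t\mapsto t^{\gamma}$ on $t>0$ (whose direction depends on $\mathrm{sign}(\gamma)$) to conclude that the two opposite sign flips cancel, leaving a consistent maximization of $\varphi_\alpha$. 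A minor technical care point is to verify that $\varphi_\alpha(x,y)>0$ throughout the admissible region so that the fractional power is well-defined and strictly monotone; this follows because $\eta_x,\eta_y,\eta_{x,y}(\alpha)>0$, as already noted in Section~\ref{sec:ea} and the kernel-validity discussion following Equation~\ref{eq:3ext}.
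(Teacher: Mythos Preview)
Your proposal is correct and follows essentially the same route as the paper's proof. The paper introduces $Z_\alpha(y)=C_\alpha\cdot\varphi_\alpha(x,y)$, substitutes $\beta=1-\tau$ and $\beta=1+\zeta$, and argues both resulting divergences are minimized as $Z_\alpha(y)\to 1$ using the bound $Z_\alpha(y)\le 1$; your version separates $C_\alpha$ from $\varphi_\alpha$ up front and replaces the ``approach $1$'' argument by an explicit sign/monotonicity case split on $\beta\gtrless 1$, which is a slightly cleaner but equivalent justification.
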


\begin{proof}
\ignore{
The proof of this lemma depends on the claim that the cost functions  $D'_{\alpha, 1-\tau}(p(X,x) : p(Y,y))$  and  $D'_{\alpha, 1+\zeta}(p(X,x) : p(Y,y))$  are equivalent. We start by denoting $Z_{\alpha}(y) = \frac{|K_{X\cup x}| ^{1-\alpha}|{K_{Y \cup y}}|^\alpha}{  | \alpha K_{Y \cup y} +(1-\alpha) { K_{X\cup x}} |}= (\int_{-\infty}^{\infty}{p_{X,x}(t)^\alpha p_{Y,y}(t)^{1-\alpha} dt} )^2 \le  1$. From this notation,   $D'_{\alpha, 1-\tau}(p(X,x) : p(Y,y))$  and  $D'_{\alpha, 1+\zeta}(p(X,x) : p(Y,y))$ could be re-written as}
We start by the claim that $\hat{y}(\alpha, 1-\tau) = \hat{y}(\alpha, 1+\zeta)$, $0 < \alpha < 1$, $\tau>0$, $\zeta>0$ and both predictions are achieved by maximizing  $(\int_{-\infty}^{\infty}{p_{(Y,y)}(t)^\alpha p_{(X,x)}(t)^{1-\alpha} dt} )^2$  $=  \frac{|K_{X}| ^{1-\alpha} |{K_{Y}}|^\alpha  \eta_x^{1-\alpha}  \eta_y^\alpha}{  | \alpha K_{Y } +(1-\alpha) { K_{X}} |  \eta_{x,y}{\alpha}} \le 1$,  which is $\propto  \varphi_{\alpha}(x,y)$, where  $p_{(X,x)} = \mathcal{N}(0, K_{X \cup x})$ and $p{(Y,y)} = \mathcal{N}(0, K_{Y \cup y})$, $p{(X)} = \mathcal{N}(0, K_{X})$ and $p{(Y)} = \mathcal{N}(0, K_{Y})$. This claim indicates that the cost functions  $D'_{\alpha, 1-\tau}(p(X,x) : p(Y,y))$  and  $D'_{\alpha, 1+\zeta}(p(X,x) : p(Y,y))$  are equivalent. 

Let us introduce $Z_{\alpha}(y) = \frac{|K_{X\cup x}| ^{1-\alpha}|{K_{Y \cup y}}|^\alpha}{  | \alpha K_{Y \cup y} +(1-\alpha) { K_{X\cup x}} |}= (\int_{-\infty}^{\infty}{p_{X,x}(t)^\alpha p_{Y,y}(t)^{1-\alpha} dt} )^2 \le  1$. From this notation,   $D'_{\alpha, 1-\tau}$ $(p(X,x) : p(Y,y))$  and  $D'_{\alpha, 1+\zeta}(p(X,x) : p(Y,y))$ could be re-written as

{\small
\begin{equation}
\small{
\begin{split}
&D_{\alpha, 1-\tau}(p(X,x) : p(Y,y)) = \frac{1}{-\tau} \Bigg[ \Big(Z_{\alpha}(y)\Big)^{\frac{\tau}{2 (1-\alpha)}} -1   \Bigg],\\
&D_{\alpha, 1+\zeta}(p(X,x) : p(Y,y)) = \frac{1}{\zeta} \Bigg[ \Big(Z_{\alpha}(y)\Big)^{\frac{-\zeta}{2 (1-\alpha)}}  -1  \Bigg]\\
\end{split}
}
\label{eq:asmcost}
\end{equation}}
From Equation~\ref{eq:asmcost} and under the assumption that $0<\alpha<1$ and $Z_{\alpha}(y)\le1$, then $D_{\alpha, 1-\tau}(p(X,x) : p(Y,y))\ge0, D_{\alpha, 1+\zeta}(p(X,x) : p(Y,y)) \ge0$. Both are clearly  minimized as $Z_{\alpha}(y)$ approaches 1 (i.e. maximized,   since $Z_{\alpha}(y)\le1$).  Comparing  Equation~\ref{eq:asm} and~\ref{eq:asmcost}, $Z_{\alpha}(y) =  (\int_{-\infty}^{\infty}{p_{X,x}(t)^\alpha p_{Y,y}(t)^{1-\alpha} dt} )^2 =  \frac{|K_{X\cup x}| ^{1-\alpha}|{K_{Y \cup y}}|^\alpha}{  | \alpha K_{Y \cup y} +(1-\alpha) { K_{X\cup x}} |}=   \frac{|K_{X}| ^{1-\alpha}| .  \eta_x^{1-\alpha} .{K_{Y}}|^\alpha . \eta_y^{\alpha}}{  | \alpha K_{Y } +(1-\alpha) { K_{X}} | .  \eta_{x,y}(\alpha)}\  \propto \varphi_{\alpha}(x,y)= \frac{  \eta_x^{1-\alpha}  \eta_y^\alpha}{\eta_{x,y}(\alpha)}$, since $| \alpha K_{Y } +(1-\alpha) { K_{X}} | $,  $|{ K_{X}} | $, and  $| K_{Y } | $ do not depend on the predicted output $\hat{y}$.  This indicates that SMTGP optimization function is inversely proportional to $\varphi_{\alpha}(x,y)$, we upper-bounded in Lemma~\ref{lemma1}. Hence, it is not hard to see that $\zeta$ and $\tau$ controls whether to maximize $\varphi_{\alpha}(x,y)^\frac{\tau}{1-\alpha}$ or maximize $-\varphi_{\alpha}(x,y)^\frac{-\zeta}{1-\alpha}$, which  are equivalent. This directly leads to that $\hat{y}(\alpha, \beta)$ maximizes $\frac{ \eta_x^{1-\alpha} . \eta_y^\alpha}{\eta_{x,y}(\alpha)} \le \frac{  | \alpha K_{Y } +(1-\alpha) { K_{X}} | }{|K_{X}| ^{1-\alpha} |{K_{Y}}|^\alpha }$ and it does not depend on $\beta$ theoretically. 
\end{proof}

The proof of Lemma~\ref{lemma2} shows the relationship between $\varphi_{\alpha}(x,y)$ and SM divergence through a derivation that starts from SMTGP cost function. From lemma~\ref{lemma1} and~\ref{lemma2}, the term $\frac{|K_{X}| ^{1-\alpha} |{K_{Y}}|^\alpha }{  | \alpha K_{Y } +(1-\alpha) { K_{X}} | } \le 1$ represents an agreement function between $p(X)$  and $p(Y)$. Similarly, $\frac{|K_{X\cup x}| ^{1-\alpha} |{K_{Y \cup y}}|^\alpha }{  | \alpha K_{Y\cup y} +(1-\alpha) { K_{X \cup y}} | } = \frac{|K_{X}| ^{1-\alpha} |{K_{Y}}|^\alpha  \eta_x^{1-\alpha}  \eta_y^\alpha}{  | \alpha K_{Y } +(1-\alpha) { K_{X}} |  \eta_{x,y}{\alpha}} \le 1$ is an agreement function between the extended distributions $p(X,x)$ and $p(Y,y)$. This agreement function increases as the weighted volume of the input and the output distributions (i.e..${|K_{X\cup x}| ^{1-\alpha} |{K_{Y \cup y}}|^\alpha }$, weighted by $\alpha$) is as close as possible to the volume of the joint distribution (i.e. ${  | \alpha K_{Y\cup y} +(1-\alpha) { K_{X \cup x }} | }$). This function reaches $1$ (i.e. maximized)  when the two distributions are identical, which justifies maximizing $\varphi_{\alpha}(x,y)$ as indicated in lemma~\ref{lemma2}. From another view, maximizing $\varphi_{\alpha}(x,y)$ prefers minimizing $\eta_{x,y}(\alpha)$, which maximizes the $p((x,y)| (x_1, y_1),.. ,(x_N$ $,y_N))$, that we abbreviate as $p(x,y)$; this is motivated by  our intuition in Subsection~\ref{sec:ea}. However, SMTGP maximizes  $\varphi_{\alpha}(x,y) = \frac{  \eta_x^{1-\alpha}  \eta_y^{\alpha}}{\eta_{x,y}(\alpha)}$, this gives a probabilistic  sense for the cost function when we follow our intuition that $\eta_x \propto {1}/{p(x)}$, $\eta_y \propto {1}/{p(y)}$  and $\eta_{x,y}(\alpha) \propto {1}/{p(x,y)}$. Hence  $\varphi_{\alpha}(x,y)$ could be seen as $\frac{p(x,y)}{p(x)^{1-\alpha} p(y)^{\alpha}}$, discussed in the following subsection.  This understanding motivated us to plot the relation  between $\varphi_{\alpha}(x,y)$  and the test error on SMTGP prediction. Figure~\ref{fig:phixy} shows a clear correlation between  $\varphi_{\alpha}(x,y)$ and the prediction error. Hence, it introduces a clear motivation to study it as a certainty measure, which could be associated with each structured output prediction.
\vspace{-5mm}
\begin{figure}[h!] 
  \centering 
  \includegraphics[width=0.7\textwidth]{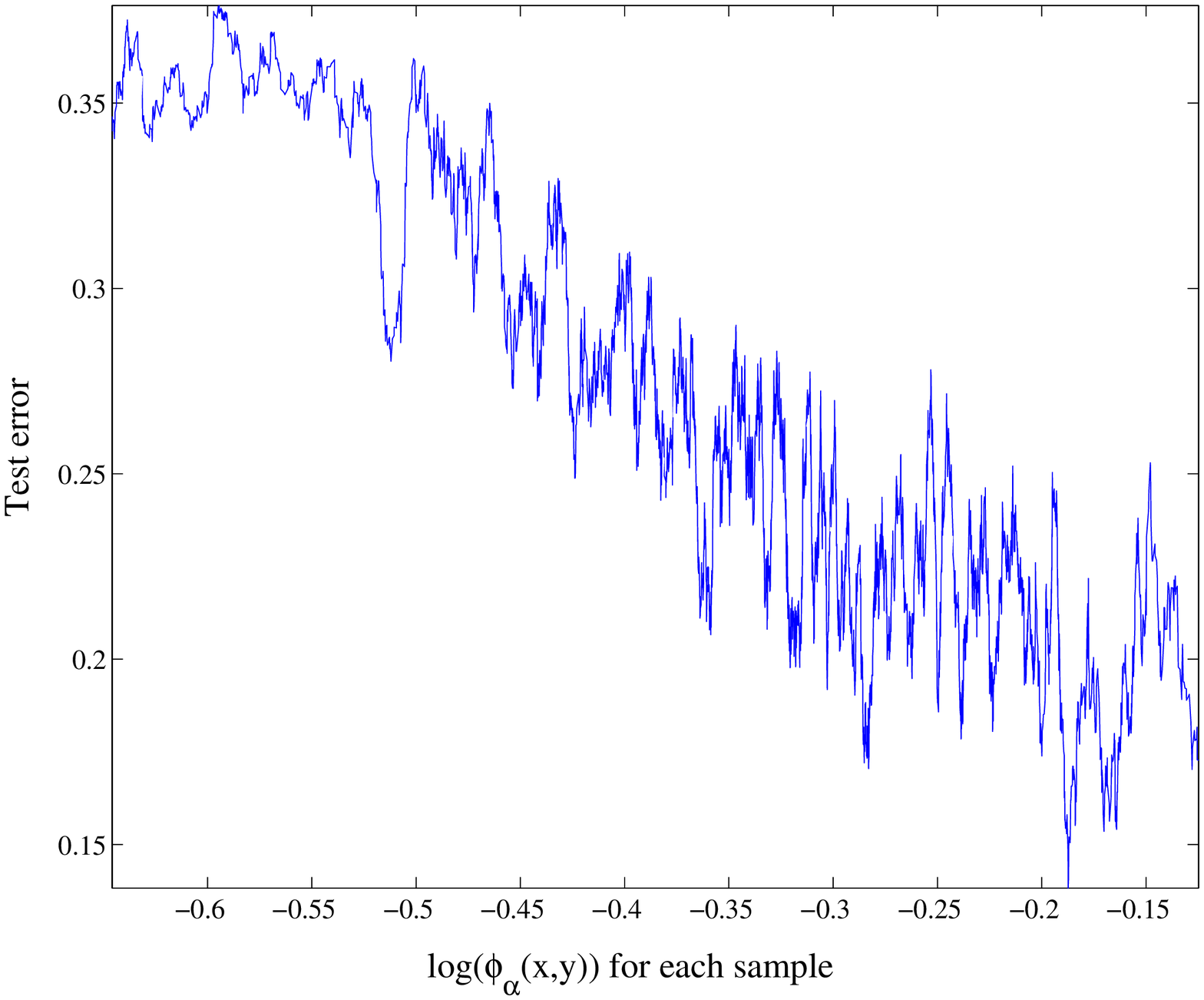}
  \caption{$log(\varphi_{\alpha}(x,y))$ against test error on USPS dataset using SMTGP, $\alpha = 0.8$, $\beta = 0.5$ }
  \label{fig:phixy}
\end{figure}
\vspace{-5mm}
\ignore{
\begin{wrapfigure}{r}{0.25\textwidth}   
  \includegraphics[width=0.25\textwidth]{phixy.eps}
  \vspace{-10mm}
  \caption{$\varphi_{\alpha}(x,y)$ against test error on USPS dataset using SMTGP, $\alpha = 0.8$, $\beta = 0.5$ }
    \vspace{-10mm}
  \label{fig:phixy}
\end{wrapfigure}
}

\ignore{
\begin{figure}[h!]
  \begin{center}
    \includegraphics[width=0.25\textwidth]{phixy.eps}
  \end{center}
  \caption{$\varphi_{\alpha}(x,y)$ against test error on USPS dataset using SMTGP, $\alpha = 0.8$, $\beta = 0.5$ }
  \label{fig:phixy}
\end{figure}
}

\subsection{Probabilistic Interpretation of Maximizing $\varphi_{\alpha}(x,y) = \frac{ \eta_x^{1-\alpha} . \eta_y^\alpha}{\eta_{x,y}(\alpha)}$}
\label{sec:pi}

As detailed in the previous subsection, one can interpret $\eta_{x,y}(\alpha) \propto {1}/{p(x,y)}$, $\eta_x \propto {1}/{p(x)}$, $\eta_y \propto {1}/{p(y)}$. Hence,$\frac{\eta_{x,y}(\alpha)}{ \eta_y} \propto p(y|x)$,  $\frac{ \eta_x}{\eta_{x,y}(\alpha)} \propto p(x|y)$. Hence, what does $\frac{ \eta_x^{1-\alpha} . \eta_y^\alpha}{\eta_{x,y}(\alpha)}$ mean?  Since $0<\alpha<1$, it is obvious that $min(\eta_x,\eta_y) <f_1(\alpha) = \eta_x^{1-\alpha}.\eta_y^{\alpha}< max(\eta_x,\eta_y)$. Figure~\ref{fig:etaalphas} shows the behavior of $f_1(\alpha)$ against $f_2(\alpha) = (1-\alpha) \cdot \eta_x +\alpha \cdot \eta_y$, which is also bounded between $min(\eta_x,\eta_y)$ and $max(\eta_x,\eta_y)$. According to this figure, $f_1(\alpha)$ behaves very similar to $f_2(\alpha)$ as $|\eta_x-\eta_y|$ approaches zero, where linear approximation is accurate. However, as $|\eta_x-\eta_y|$ gets bigger, $f_1(\alpha)$ gets biased towards $min(\eta_x, \eta_y)$ as indicated in the left column of figure~\ref{fig:etaalphas}. Hence, $\frac{\eta_{x,y}(\alpha)}{ \eta_x^{1-\alpha} . \eta_y^\alpha}$ is interpreted depending on the values of $\eta_x \propto \frac{1}{p(x)}$, $\eta_y \propto \frac{1}{p(y)}$, and  $\eta_{x,y}(\alpha) \propto \frac{1}{p(x,y)}$ as follows:
\begin{enumerate}
\item{If $\eta_x<<\eta_y$, $\frac{ \eta_x^{1-\alpha} . \eta_y^\alpha}{\eta_{x,y}(\alpha)} \widehat{\approx}$\footnote{$\widehat{\approx}$  indicates equivalence for optimization/prediction}$\frac{ \eta_x}{\eta_{x,y}(\alpha)} \propto p(y|x)$ }
\item{If $\eta_y<<\eta_x$, $\frac{ \eta_x^{1-\alpha} . \eta_y^\alpha}{\eta_{x,y}(\alpha)} \widehat{\approx} \frac{ \eta_y}{\eta_{x,y}(\alpha)} \propto p(x|y)$ }
\item{If $\eta_y\approx\eta_x$, $\frac{ \eta_x^{1-\alpha} . \eta_y^\alpha}{\eta_{x,y}(\alpha)} \widehat{\approx} \frac{ \eta_y}{\eta_{x,y}(\alpha)} \propto p(x|y) \approx \frac{ \eta_x}{\eta_{x,y}(\alpha)} \propto p(y|x)$ }; This is less likely to happen since $p(x) = p(y)$ in this case.
\item{If $|\eta_y-\eta_x|<\epsilon$, in this case $\alpha$ linearly control  $\frac{ \eta_x^{1-\alpha} . \eta_y^\alpha} {\eta_{x,y}(\alpha)} \widehat{\approx} \frac{ (1-\alpha) \eta_x + \alpha \eta_y}{\eta_{x,y}(\alpha)} \propto \big(\frac{1-\alpha}{p(y|x)} + \frac{\alpha}{p(x|y)}\big)^{-1}$ }
\end{enumerate}

Hence, SM TGP regression predicts the output of maximum certainty on $p(x,y) = \mathcal{N}(0, (1-\alpha)K_{X \cup x} +\alpha K_{Y\cup y})$, conditioned on the uncertainty extension on $p(x) = \mathcal{N}(0, K_{X\cup x})$ and $p(y) = GP(0, K_{Y\cup y} )$. The conditioning is biased towards $max(p(x), p(y))$, which gives best discrimination relative to $p(x,y)$ and hence, maximize the certainty of the prediction. In case the difference between $p(x)$ and $p(y)$ is not high, the prediction is based on a  weighted sum of $p(y|x)$ and $p(x|y)$, as shown in point 4 above.
\begin{figure}[!htb!]
\centering
\includegraphics[scale=.5]{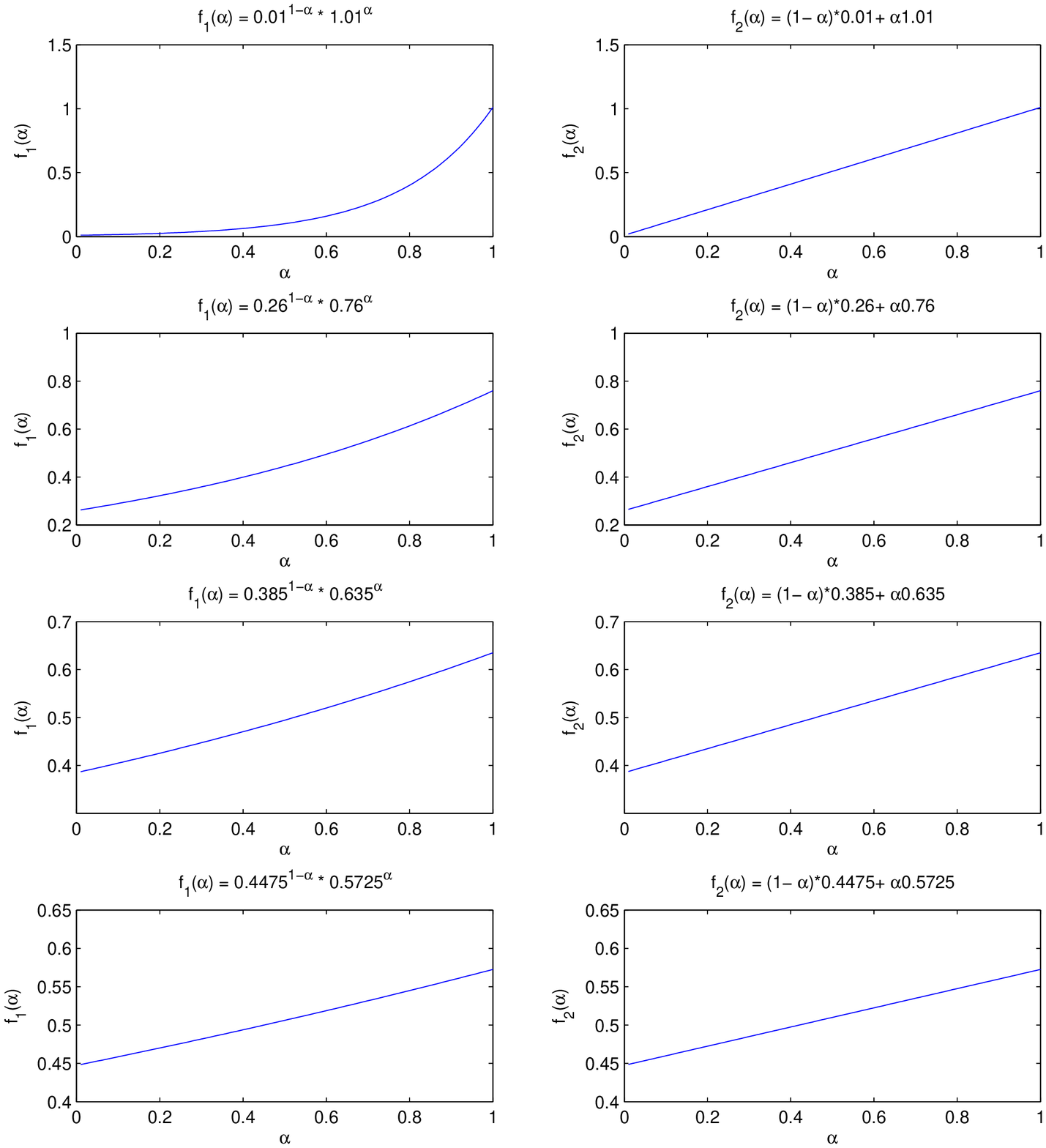}
\vspace{-15mm}
\caption{Left plot functions are in the form $f_1(\alpha) =  \eta_{D_1}^{1-\alpha}\eta_{D_2}^{\alpha}$ and their corresponding $f_2(\alpha) = (1-\alpha)\cdot \eta_{D1} +\alpha \cdot \eta_{D_2}$ are on the right;  rows indicate different values of $\eta_{D_1},\eta_{D_2} $, where $D_1$ and $D_2$ are arbitrary two domains}
\label{fig:etaalphas}
\end{figure}

\section{Experimental Results}
\label{sec:experiments}

\begin{sloppypar}
In this section, we evaluate SMTGP on two Toy examples, USPS dataset in an image reconstruction task, and both Poser dataset~\citep{Trig06} and HumanEva dataset~\citep{SigalBB10} for a 3D pose estimation task. It is shown in~\citep{Bo:2010,Yamada:2012}, that TGP outperforms  Kernel Regression (KR), Gaussian Process Regression (GPR), Weighted K-Nearest Neighbor regression~\citep{Rasmussen:2005}, Hilbert Schmidt independence criterion (HSIC)~\citep{HSIC05}, and Kernel Target Alignment method(KTA)~\citep{KTA01} on a Toy example, HumanEva dataset, and Poser Dataset (i.e. Pose Estimation datasets). Hence, we extended our evaluation beyond\ignore{not to cover only} pose estimation datasets. We compared our SMTGP with KLTGP and IKLTGP. IKLTGP stands for inverse KLTGP, which predicts the output by minimizing the KL divergence of the output probability distribution from the input probability distribution~\citep{Bo:2010}. The main motivation behind this comparison is that KLTGP and IKLTGP are biased to one of the distributions, and therefore the user has to choose either to use KLTGP or IKLTGP based on the problem. In contrast, SMTGP could be adapted by $\alpha$ and $\beta$ on the validation set, such that the prediction error is minimized. From this point, we denote the set of KLTGP, IKLTGP and SMTGP as \textit{TGPs}. Our presentation of the results starts by  the specification of the toy examples and the datasets in subsection~\ref{sec:exprspecs}. Then, we present our parameter settings and  how $\alpha$ and $\beta$  are selected in subsection~\ref{sec:lab}. Finally, we  show our argument on the performance on  these tasks in subsection~\ref{sec:exprdisc}.
\end{sloppypar}
\subsection{Specification of the Toy Examples and the Datasets}
\label{sec:exprspecs}

\ignore{\textbf{{Toy Example 1}}: }
\subsubsection{Toy Example 1~\citep{Bo:2010}}
The training set for the first toy problem predict a 1D output variable $y$ given a 1D control $x$ (the input). It consists of 250 values of $y$ generated uniformly in (0,1), for which $x =y+0.3 sin(2 y \pi)+ \epsilon$ is evaluated with $\epsilon$ such that $\epsilon  = N(\mu = 0, \sigma  = 0.005)$; see Figure~\ref{fig:toyex1}. Stars correspond to examples where $KNN$  regression and $GPR$ suffer from ‘boundary/discontinuous effects’ as indicated in~\citep{Bo:2010}. The \textit{TGPs} were tested with 250 equally spaced inputs $x$ in $(0,1)$. \ignore{
\noindent{\textbf{Parameter Settings and Error Measures}:} The value of $\lambda_x$ and $\lambda_y$ is $10^{-4}$ in this example. We used $2 \rho_x^2 = 5$ for the input kernel bandwidth, and $2 \rho_y^2 = 5000$ for the output kernel bandwidth. } We used the mean prediction error to measure the performance on this example.
\ignore{
\begin{figure}[h!]
 \centering
    \includegraphics[width=0.6\textwidth]{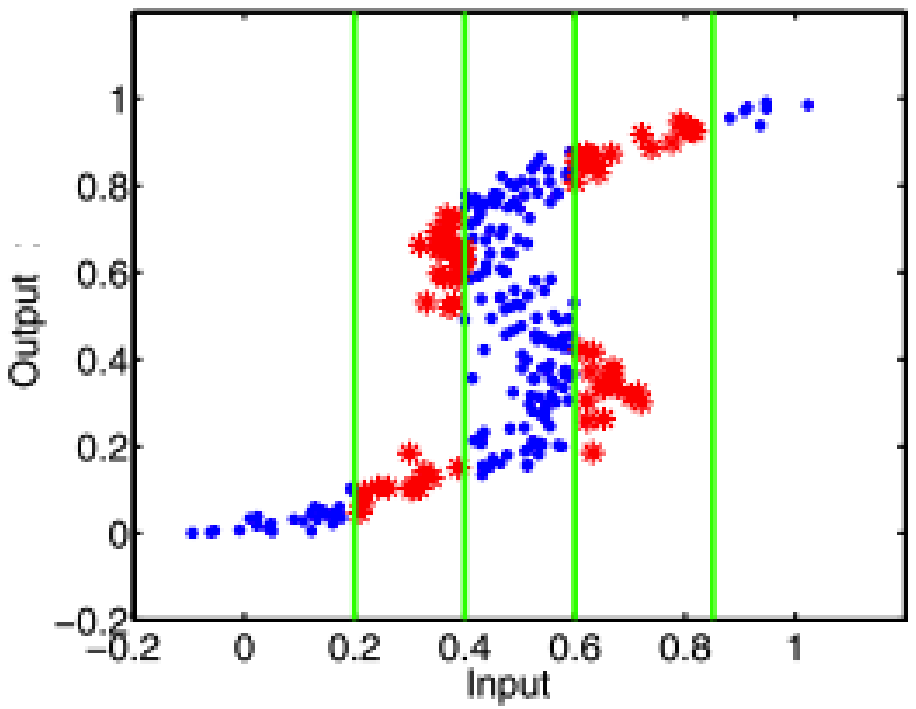}
    \caption{Toy Example 1}
      \label{fig:toyex1}
\end{figure}
}

\begin{figure}[ht!]
  \begin{minipage}[b]{0.4\linewidth}
    \centering
    \includegraphics[width=0.8\textwidth]{ToyEx.eps}
    \caption{Toy Example 1}
      \label{fig:toyex1}
  \end{minipage} 
   \begin{minipage}[b]{0.6\linewidth}
    \centering
    \includegraphics[width=1.0\textwidth]{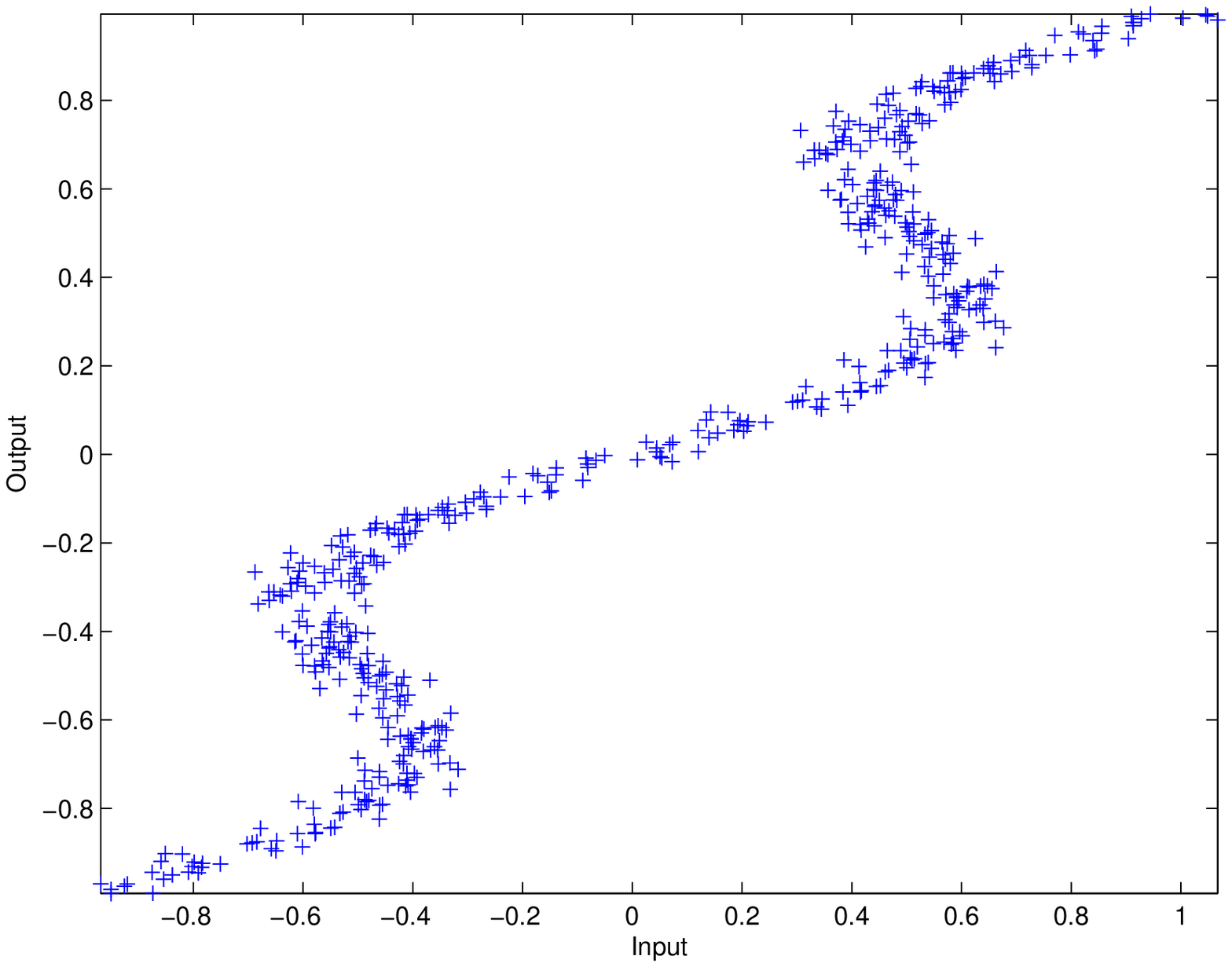}
     \caption{Toy Example 2}
     \label{figtoy2s}
  \end{minipage}
\end{figure}

\ignore{ 
\begin{figure}[h!]
  \begin{center}
    \includegraphics[width=0.3\textwidth]{ToyEx.eps}
  \end{center}
  \caption{Toy Example~\citep{Bo:2010}}
  \label{fig:toyex1}
\end{figure}}

\ignore{\textbf{{Toy Example 2}:} }

\subsubsection{Toy Example 2}
In order to introduce a more challenging situation, we generate a double $S$ shape; see Figure~\ref{figtoy2s}. Toy example 2 is constructed by concatenated two $S$ shapes, which makes the overall prediction error more challenging to reduce. In addition, we down-sampled the points by 2, such that the total number of points is the same as Toy example 1. Hence, there is less evidence on training data compared to Toy example 1. Similarly, the \textit{TGPs} were tested with 500 equally spaced inputs $x$ in $(-1,1)$. We used the same error-measure in Toy Example 1.

\ignore{
\begin{figure}[h!]
    \centering
    \includegraphics[width=0.8\textwidth]{toyex2.eps}
     \caption{Toy Example 2}
     \label{figtoy2s}
\end{figure}
}
\ignore{
\noindent{\textbf{Parameter Settings and Error Measures}:} Same as Toy Example 1. 
}
\ignore{
\begin{figure}[h!]
  \begin{center}
    \includegraphics[width=1.0\textwidth]{toyex2.eps}
  \end{center}
  \caption{double $S$ Toy Example}
  \label{figtoy2s}
\end{figure}}\ignore{
\begin{table*}[htbp]
  \centering
  \caption{Mean error on Toy (1,2), USPS, Poser and HumanEva Datasets}	
  	\scalebox{1.0}{
 	\begin{tabular}{c|c|c|c|c|c|}
    \toprule
          & \textbf{Toy1 Example} & \textbf{Toy2 Example} & \textbf{USPS DS} & \textbf{Poser DS} & \textbf{HumanEva DS} \\
    \midrule
    \textbf{SMTGP error} & 0.1126 & 0.110  & 0.2587  & 5.4296  & 37.59 \\
    \hline
    \textbf{KLTGP error}~\citep{Bo:2010} & 0.1160 & 0.126 & 0.2665  & 5.4296  & 37.64 \\
    \textit{\textbf{Improvement}} & \textbf{2.93\%} & \textbf{12.70\%} & 3.03\% & 0.00\% & 0.13\% \\
    \hline
    \textbf{IKLTGP error}~\citep{Bo:2010} & 0.1150 & 0.114      &  0.2683     &     6.484   & 55.622 \\
    \textit{\textbf{Improvement}} & 2.09\% &   1.3\%    & 3.71\%   &  \textbf{16.3\%}        &  \textbf{32.41\%}\\
    \bottomrule
    \end{tabular}%
    }
  \label{tab:experiments}%
\end{table*}
}

\ignore{\textbf{{Image Reconstruction task on USPS Dataset~\citep{usps94}}}:} 

\subsubsection{Image Reconstruction task on USPS Dataset~\citep{usps94}}

The image reconstruction problem~\citep{SOAR09} is given the outer 240 pixel values of a handwritten digit (16x16) from USPS data set, the goal is to predict the 16 pixel values lying in the center. We split the dataset into in 4649 test examples and 4649 training samples (No  knowledge is assumed for the label of the digit). The range of the pixel values in this dataset is in $(-1,1)$.\ignore{

\vspace{5mm}

\noindent{\textbf{Parameter Settings and Error Measures}:}  The parameters $2 \rho_x^2$, $2 \rho_y^2$, $\lambda_X$, and $\lambda_Y$ were assigned to  $2$, $2$, $0.5*10^{-3}$, and $0.5*10^{-3}$, respectively.} The error measure amounts to the root mean-square error averaged over the 16 gray-scales in the center. $Error_{pose}(\hat{y},y^*) =  \|\hat{y} - {y^*}\|$, where $\hat{y} \in R^{16}$ is the predicted 16-values' vector lying in the center, $y^*$ is the true 16-colors of the given outer 240 pixels values $x$. 

\ignore{\textbf{{3D pose estimation task on Poser Dataset~\citep{Trig06}}}:}

\subsubsection{3D pose estimation task on Poser Dataset~\citep{Trig06}}

Poser dataset consists of 1927 training and 418 test images, which are synthetically generated and tuned to unimodal predictions. The image features, corresponding to bag-of-words representation with silhouette-based shape context features. The \textit{TGPs} requires inversion of $N \times N$ matrices during the training, so the complexity of the solution is $O(N^3)$, which is impractical when $N$ is larger. Hence, in both Poser and Human Eva datasets, we applied the \textit{TGPs} by finding the $K_{tr}$ nearest neighbors ($K_{tr} \approx 800$ in our experiments).  This strategy was also adopted in~\citep{Bo:2010,Yamada:2012}.\ignore{

\vspace{5mm}

\noindent{\textbf{Parameter Settings and Error Measures}:} The parameters $2 \rho_x^2$, $2 \rho_y^2$, $\lambda_X$, and $\lambda_Y$ were assigned to  $5$, $5000$, $10^{-4}$, and $10^{-4}$, respectively.} Poser dataset was generated using Poser software package, from motion capture (Mocap)data (54 joint angles per frame). The error is measured by the root mean square error (in degrees), averaged over all joints angles, and is given by: $Error_{pose}(\hat{y}, y^*) = \frac{1}{54} \sum_{m=1}^{54} \| {\hat{y}}^m  - {y^*}^m mod$  $360^\circ \|$ , where $\hat{y} \in R^{54}$ is an estimated pose vector, and $y^* \in R^{54}$ is a true pose vector.

\ignore{\textbf{{3D pose estimation task on HumanEva Dataset~\citep{SigalBB10} }:} }

\subsubsection{3D pose estimation task on HumanEva Dataset~\citep{SigalBB10} }

HumanEva datset contains synchronized multi-view video and Mocap data. It consists of 3 subjects performing multiple activities. We use the histogram of oriented gradient (HoG) features ($\in R^{270}$) proposed in~\citep{Bo:2010}. We use training and validations sub-sets of HumanEva-I and only utilize data from 3 color cameras with a total of 9630 image-pose frames for each camera. This is consistent with experiments in~\citep{Bo:2010,Yamada:2012}. We use half of the data (4815 frames) for training and half (4815 frames) for testing.\ignore{

\vspace{5mm}

\noindent{\textbf{Parameter Settings and Error Measures}:} The parameters $2 \rho_x^2$, $2 \rho_y^2$, $\lambda_X$, and $\lambda_Y$ were assigned to  $5$, $500000$, $10^{-3}$, and $10^{-3}$, respectively.} In HumanEva, pose is encoded by (20) 3D joint markers defined relative to the torso Distal joint in camera-centric coordinate frame, so $y = [y^{(1)}, y^{(2)}, ..., y^{(20)}] \in R^{60}$ and  $y^{(i)} \in R^3$. Error (in $mm$) for each pose is measured as average Euclidean distance: $Error_{pose}(\hat{y},y^*) = \frac{1}{20} \sum_{m=1}^{20} \|\hat{y}^m - {y^*}^m\|$, where $\hat{y}$ is an estimated pose vector, and $y^*$ is a true pose vector.

\subsection{Parameter Settings and Learning $\alpha$ and $\beta$}
\label{sec:lab}
\ignore{As indicated in lemma~\ref{lemma2}, $\beta$ does not theoretically affect the optimization function. However, in practice, we found $\beta$ affects the convergence speed to predict the output, since $SMTGP$ prediction is not in a closed-form expression and we set the maximum number of steps in the quasi-Newton optimizer to $50$ similar to~\cite{Bo:2010}.} 

Each SMTGP prediction is done by optimizing equation 15 by gradient descend with max steps of 50 (like~\cite{Bo:2010}). Since, we proved that $\beta$ is mainly changing the power of the cost function, which theoretically does not affect the prediction, as detailed in Section~\ref{sec:eiganalsis}. Hence, this motivated us to only consider only three values, which are actually edge cases ($\beta=0.99$), ($\beta = 0.5$ for $\beta<1$),  ($\beta = 1.5$ for $\beta>1$). We found that that the role of $\beta$ in practice is mainly affecting the convergence rate and the purpose of cross validation on $\beta$ is to find $\beta$ that converges faster. We found that there is no specific value of $\beta$ that gives the best performance for all the datasets. Hence, we suggest selecting $\beta$ by cross validation like $\alpha$ but for a different purpose.

\ignore{
The observation that the cost function does not depend on beta theoretically motivated us to only consider three values which are actually edge cases ($\beta-0.99$) is Tsallis, ($\beta = 0.5$, $\beta<1$),   ($\beta = 1.5, \beta>1$). We found that the best changing the value of $\beta$  does not change the performance significantly. However, we found that performing cross validation on $\beta$ achieves the best performance. This is since each prediction is done  by optimizing equation 15 by gradient descend  with max steps of $50$ (like~\cite{Bo:2010}). This concludes that the role of $\beta$ is mainly affecting the convergence rate and the purpose of cross validation on $\beta$ is to fine $\beta$ that converges faster.}  

We performed five fold cross validation on $\alpha$ parameters ranging from 0 to 1 step 0.05. While, we selected three values for $\beta$. $\beta \to 1 = 0.99$ in practice\ignore{(equivalent to minimizing R\'enyi divergence)}, $\beta  = 1.5$ (i.e. $\beta   >1$), $\beta  = 0.5$ (i.e. $\beta <1$ \ignore{which is equivalent to minimizing  Tsallis divergence\footnote{since Tsallis is a special case of SM where $\beta = \alpha$ , $\beta<1$}}). Our learning of the parameters covers different divergence measures and select the setting that minimize the error on the validation set. Finally, we initialize $y$ in $SMTGP$ by $KLTGP$ prediction in~\citep{Bo:2010}. Regarding $\lambda_X$, $\lambda_Y$, $\rho_X$ and $\rho_y$, we use the values selected during the training of KLTGP~\citep{Bo:2010}. Table~\ref{tab:tableparamsettings} shows the parameter setting, we used for KLTGP, IKTGP, and SMTGP models.  All these \ignore{TGPs }models share $\rho_x$,  $\rho_y$, $\lambda_x$, and $\lambda_y$ parameters. However, SMTGP has $\alpha$ and $\beta$ as additional parameters. \ignore{ Table~\ref{tab:tableparamsettings} also indicates that our lemma is hypothesis about $\beta$ is true, since $\beta= 0.99$ for all $\beta$, was chosen for all the datasets. This is justified by the results of Lemma~\ref{lemma2}  and that SM-divergence converges to Tsallis divergence as $\beta \to 1$, which is a stable limit for SM. }
\vspace{-5mm}
\begin{table}[htbp]
  \centering
  \caption{Parameter Settings for TGPs}
    \scalebox{0.8}{
    \begin{tabular}{|l|l|l|l|l|l|l|}
    \toprule
          &  $2 \rho_x^2$ & $2 \rho_y^2$ & $\lambda_x$ & $\lambda_y$ & $\alpha$ & $\beta$ \\
    \midrule
    \textbf{Toy 1} & $5$     & $0.05$  & $10^{-4}$ & $10^{-4}$ & $0.9$   & $1.5$ \\
    \textbf{Toy 2} & $5$     & $0.05$  & $10^{-4}$ & $10^{-4}$ & $0.6$   & $0.99$ \\
    \textbf{USPS} & $2$     & $2$     & $0.5*10^{-3}$ & $0.5*10^{-3}$ & $0.9$   & $0.99$ \\
    \textbf{Poser} & $5$     & $5000$  & $10^{-4}$ & $10^{-4}$ & $0.7$   & $0.5$ \\
    \textbf{Heva} & $5$     & $500000$ & $10^{-3}$ & $10^{-3}$ & $0.99$  & $0.99$ \\
    \bottomrule
    \end{tabular}%
    }
  \label{tab:tableparamsettings}%
\end{table}
\vspace{-5mm}
\subsection{Results}
\label{sec:exprdisc}
As can be noticed from Figures~\ref{fig:toy1restgp} and~\ref{figtoy1ressmtgp}, SMTGP improved on KLTGP on Toy 1 dataset. Further improvement has been achieved on  Toy 2 dataset, which is more challenging; see Figures~\ref{fig:kltgptoy2} and~\ref{fig:smtgptoy2}. These results indicates the advantages of the parameter selection of $\alpha$ and $\beta$. From Table~\ref{tab:experiments}, we can notice that SMTGP improved on KLTGP by $12.70\%$  and also on  IKLTGP by $3.51\%$ in Toy 2, which shows the adaptation behavior of SMTGP by tuning $\alpha$ and $\beta$. It was argued in~\citep{Bo:2010} that KLTGP performs better than IKLTGP in pose estimation. While, they reported that they gave almost the same performance on Toy 1, which we refer here by Toy 1. We presented Toy 2 to draw two conclusions. First, KLTGP does not always outperform IKLTGP as argued in~\citep{Bo:2010} in HumanEva dataset. Second, SMTGP could be tuned by cross-validation to outperform both KLTGP and IKLTGP.
\begin{figure}[ht!]
  \begin{minipage}[b]{0.45\linewidth}
    \centering
    \includegraphics[width=1.1\textwidth]{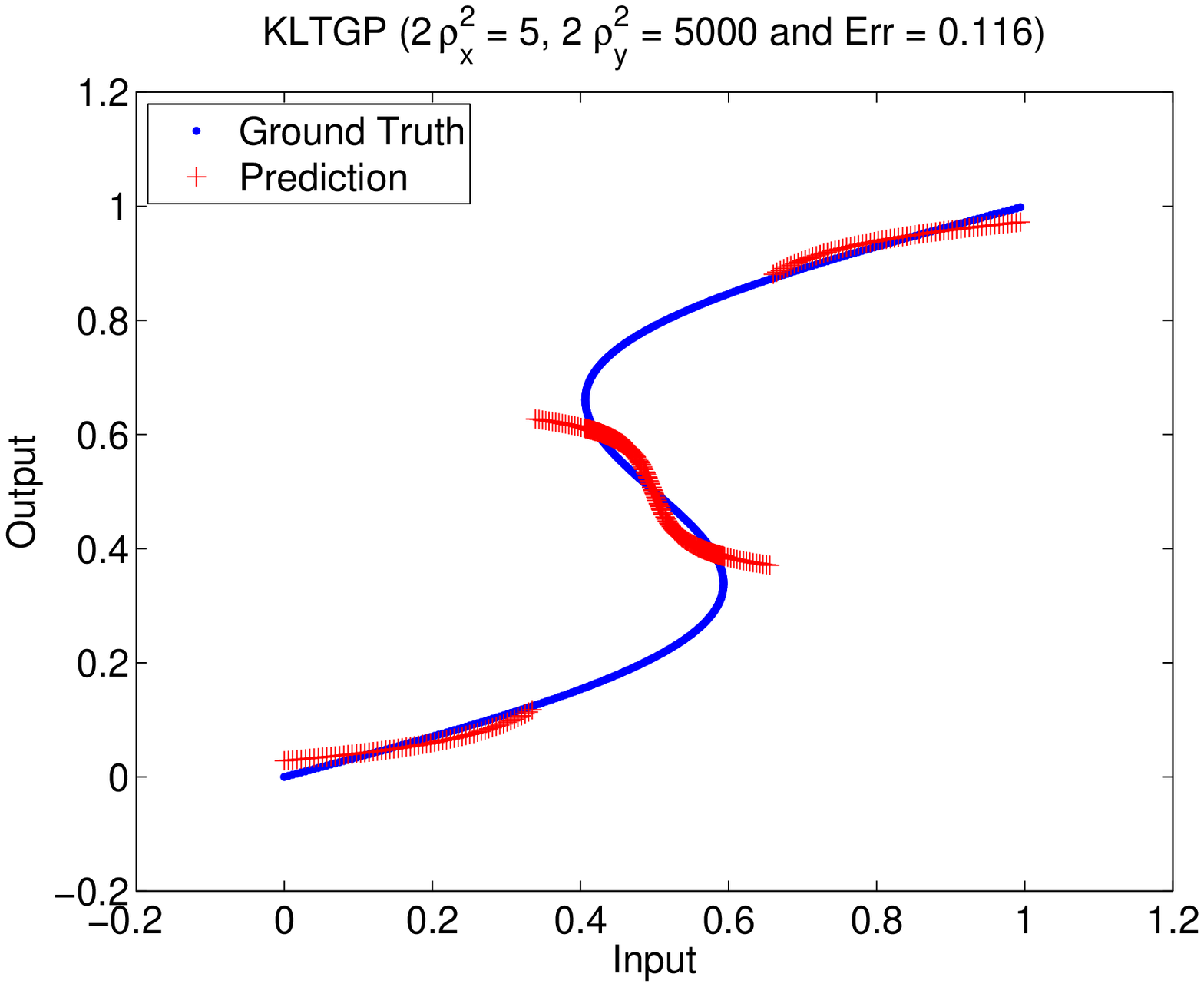}
    \caption{Toy1: KLTGP  \textbf{error  = 0.116 (+/- 0.152)}}
      \label{fig:toy1restgp}
  \end{minipage} 
     \begin{minipage}[b]{0.1\linewidth}   $\textbf{ }$
     \end{minipage}
   \begin{minipage}[b]{0.45\linewidth}
    \centering
     \includegraphics[width=1.1\textwidth]{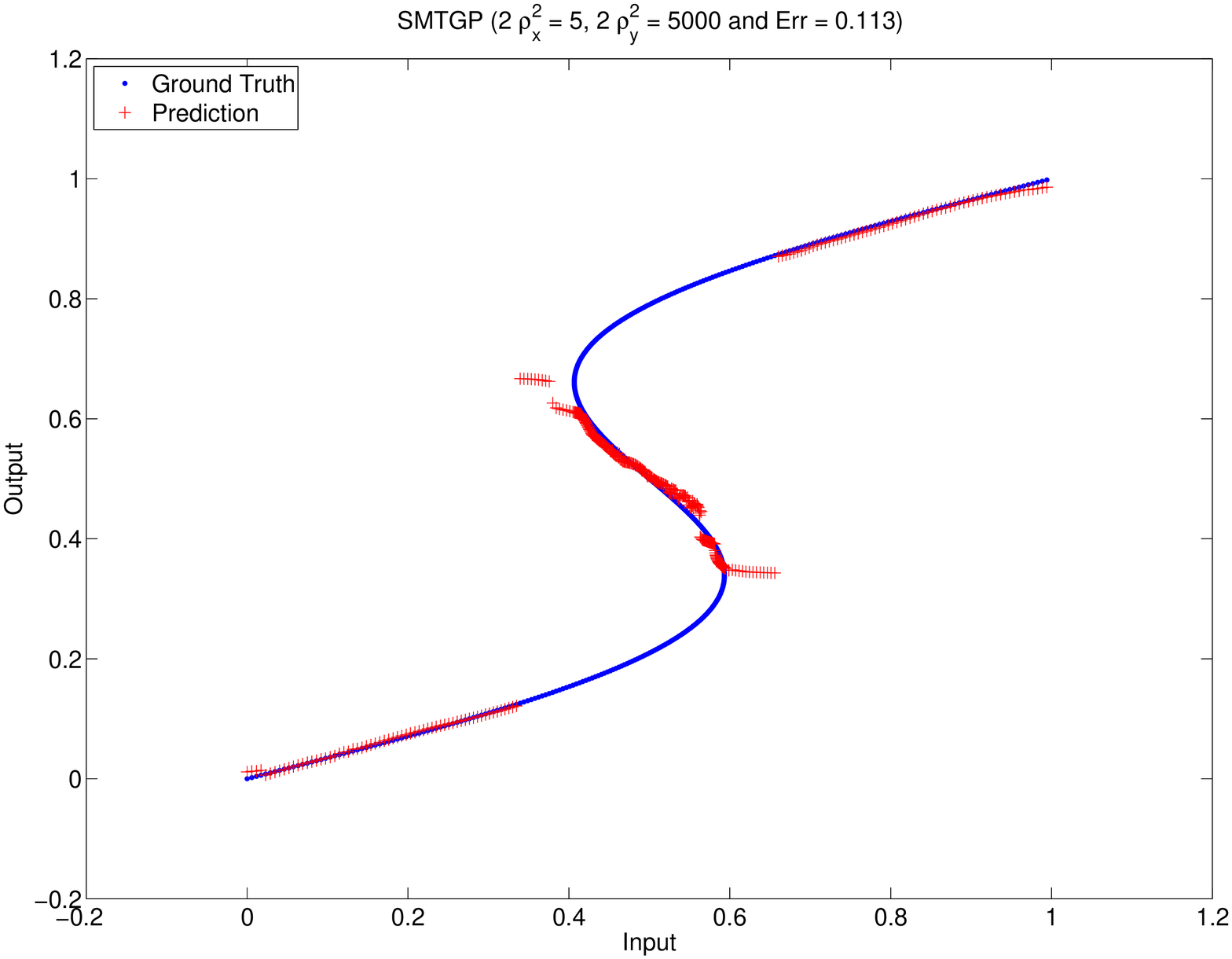}
    \caption{Toy1:SMTGP  \textbf{error  = 0.113  (+/- 0.158)}}
     \label{figtoy1ressmtgp}
  \end{minipage}
  \vspace{-5mm}
\end{figure}
\begin{figure}[ht!]
  \begin{minipage}[b]{0.44\linewidth}
    \centering
    \includegraphics[width=1.1\textwidth]{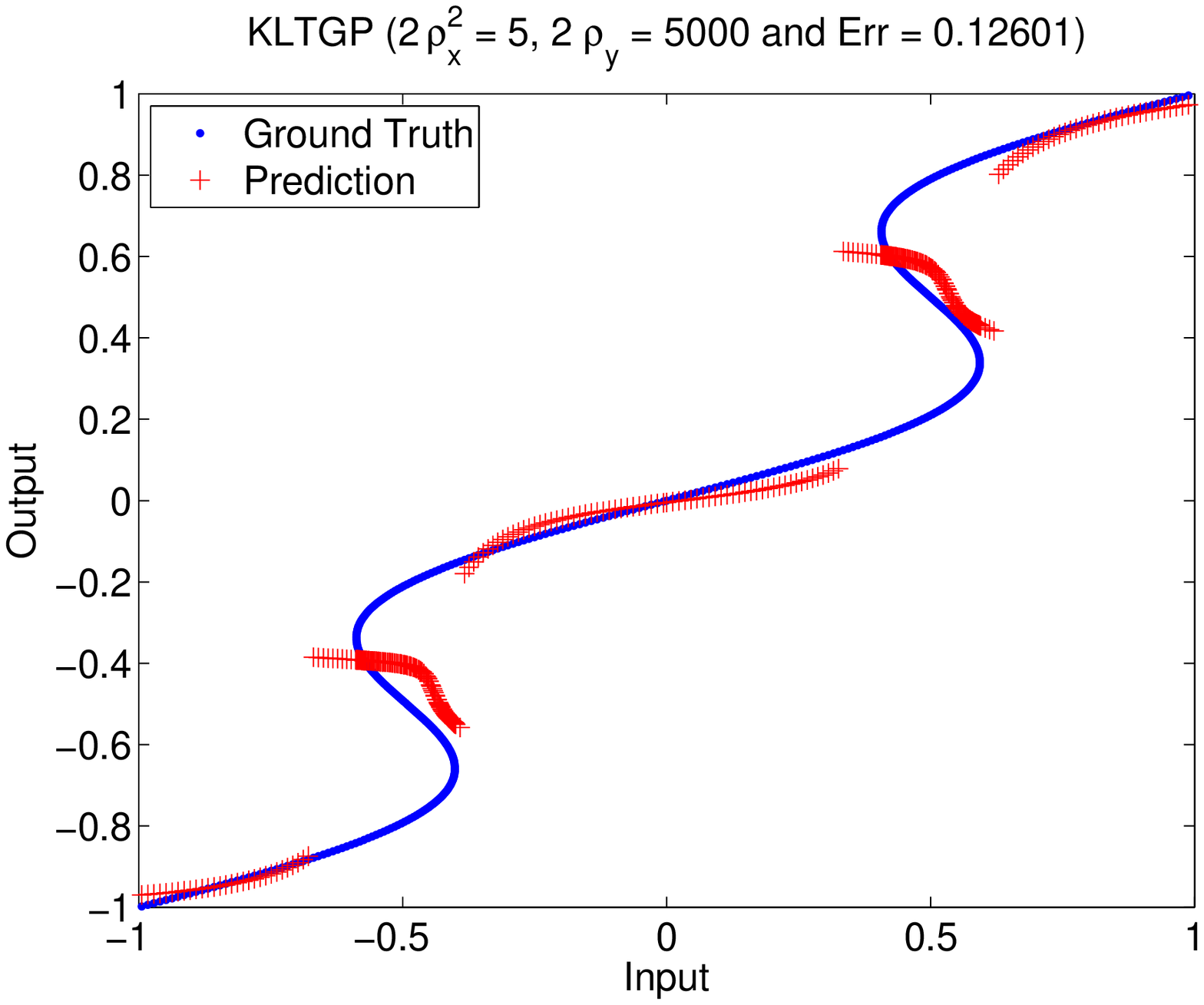}
    \caption{Toy2:KLTGP   \textbf{error  = 0.126 (+/- 0.14)}}
      \label{fig:kltgptoy2}
  \end{minipage} 
    \begin{minipage}[b]{0.1\linewidth}   $\textbf{ }$
     \end{minipage}
   \begin{minipage}[b]{0.44\linewidth}
    \centering
     \includegraphics[width=1.1\textwidth]{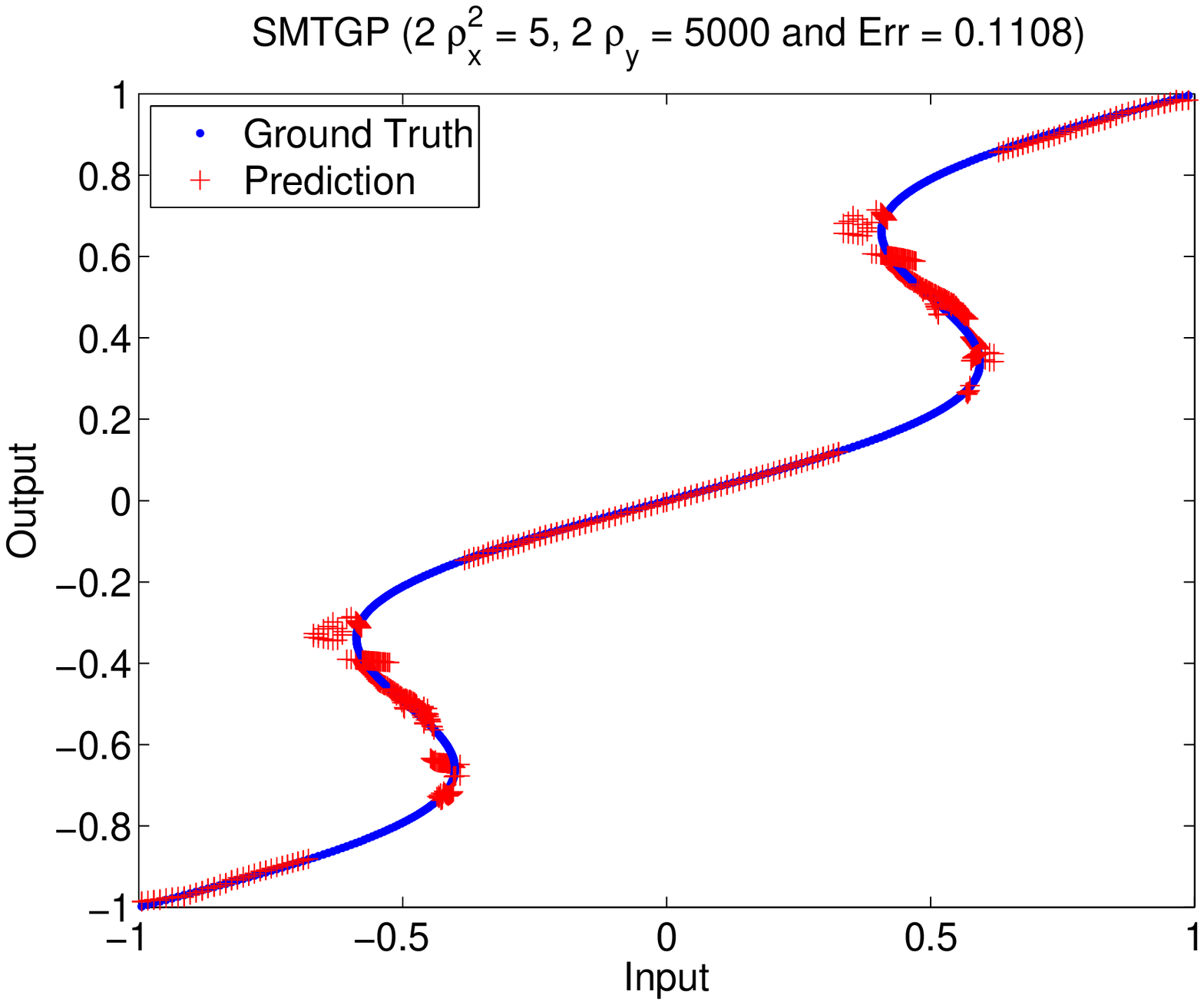}
    \caption{Toy2: SMTGP  \textbf{error  = 0.110 (+/- 0.15 )}}
    \label{fig:smtgptoy2}
  \end{minipage}
  \label{figtoy}
\end{figure}

\ignore{
\begin{table}[htbp]
  \centering
  \caption{Error measure and error improvement on the five datasets, imp.both indicates reduction over the min error of KLTGP and IKLTGP}	
  \scalebox{0.65}{
    \begin{tabular}{|c|c|cc|cc|c|}
    \toprule
          & \textbf{SMTGP} & \textbf{KLTGP} & \textit{\textbf{Imp.\%}} & \textbf{IKLTGP} & \textit{\textbf{Imp\%}} & \textit{\textbf{Imp.both\%}} \\
    \midrule
    \textbf{Toy1} & \textbf{0.1126} & 0.1160 & 2.93\% & {0.1150} & {2.09\%} & 2.09\% \\
    \textbf{Toy2} & \textbf{0.1100} & 0.1260 & 12.70\% & {0.1140} & {3.51\%} & 3.51\% \\
    \textbf{USPS} & \textbf{0.2587} & {0.2665} & {2.93\%} & 0.2683 & 3.58\% & 2.93\% \\
    \textbf{Poser} & \textbf{5.4296} & {5.4296} & 0.00\% & 6.484 & 16.26\% & 0.00\% \\
    \textbf{HEva} & \textbf{37.59} & {37.64} & 0.13\% & 55.622 & 32.42\% & 0.13\% \\
    \bottomrule
    \end{tabular}%
    }
  \label{tab:experiments}%
\end{table}
}
\vspace{-5mm}
\begin{table}[htbp]
  \centering
  \caption{Regression error and time of SMTGP, KLTGP, and IKLTGP, and error reduction of SMTGP against KLTGP and IKLTGP on the five datasets, imp. denotes the reduction}	
  \scalebox{0.7}{
    \begin{tabular}{|l|l|ll|ll|}
    \toprule
          & \textbf{SMTGP} & \textbf{KLTGP } & \textit{\textbf{Imp.\%}} & \textbf{IKLTGP} & \textit{\textbf{Imp\%}}  \\
          & &~\citep{Bo:2010} & &~\citep{Bo:2010} & \\
    \midrule
    \textbf{Toy1} & \textbf{0.1126 (18.6 sec)} & 0.116 (19.9 sec) & 2.93  \% & {0.115 (25.8 sec)} & {2.09\%}  \\
    \textbf{Toy2} & \textbf{0.11 (20.1 sec) } & 0.126 (19.2 sec) & 12.70 \% & {0.114 (25.1 sec)} & {3.51\%} \\
    \textbf{USPS} & \textbf{0.2587 (1001.7 sec)} & {0.2665 (945 sec)} & {2.93\%} & 0.2683 (1154 sec) & 3.58\% \\
    \textbf{Poser (deg)} & \textbf{5.4296 (104.3 sec)}  & {5.4296 (121.6 sec)} & 0.00\% & 6.484 (146.3 sec) & 16.26\%\\
    \textbf{HEva (mm)} & \textbf{37.59 (1631.6 sec)} & {37.64 (2028.4 sec)} & 0.13\% & 55.622 ( 2344 sec) & 32.42\% \\
    \bottomrule
    \end{tabular}%
    }
  \label{tab:experiments}%
\end{table}%

Another important observation in Table~\ref{tab:experiments} is that KLTGP outperforms IKLTGP on Poser and HumanEva datasets, while IKLTGP outperform KLTGP in the toy examples (slightly in the first and significantly in the second). The interesting behavior is that SMTGP performs at least as good as the best of KLTGP and IKLTGP in all of the datasets. KLTGP and IKLTGP are biased towards one of the input and the output distributions. However, SMTGP learns from the training data the bias factor (using $\alpha$) towards the input or the output distributions. These results could also be justified by the fact that SM divergence is a generalization of a family of divergence measure. A powerful property in SMTGP is that by controlling $\alpha$ and $\beta$, SMTGP provides a set of divergence functions to optimize for prediction. However, a member of this set is selected during training by tuning $\alpha$ and $\beta$ on a validation set. Hence, SMTGP learns $\alpha$ and $\beta$ to make better predictions. Finally, SMTGP has a desirable generalization  on the test set; see Table~\ref{tab:experiments}. Table~\ref{tab:experiments} also shows that SMTGP does not only have same complexity as KLTGP but also it has a similar constant factor. In four of the datasets, SMTGP is faster than IKLTGP and KLTGP \footnote{ for KLTGP, we used the implementation provided by~\cite{Bo:2010}}. We optimized the matrix operations in three methods as possible. SMTGP and KLTGP have similar number of matrix operations; this justifies why they have similar computational times.

We conclude our results by reporting the performance of GPR, HSIC-KNN, KTA-KNN, and W-KNN on the five datasets\footnote{ These baseline approaches was also compared in~\citep{Bo:2010} against KLTGP, and our results  is consistent with the conclusion that we reached from the comparison but only on Toy Example 1 and HumanEva dataset; see~\citep{Bo:2010} for more about the parameters of these baselines and its selection. KNN indicates that these methods were applied to training data in K-neighborhood of the testing point}; see Table~\ref{tab:exprbaselines}. Comparing   Table~\ref{tab:experiments} to  Table~\ref{tab:exprbaselines}, it is obvious that TGPs outperforms GPR, HSIC-KNN, KTA-KNN, and W-KNN.

\vspace{-5mm}'
\begin{table}[htbp]
  \centering
  \caption{Regression error of GPR, HSIC-KNN, KTA-KNN, and W-KNN regression models}
  \scalebox{0.62}{
    \begin{tabular}{|l|l|l|l|l|}
    \toprule
          & \textbf{GPR} & \textbf{WKNN } & \textbf{HSICKNN  } & \textbf{KTAKNN  } \\
          &~\citep{Rasmussen:2005} &~\citep{Rasmussen:2005} &~\citep{HSIC05} &~\citep{KTA01} \\
    \midrule
    \textbf{Toy1} & 0.17603 & 0.15152 & 0.18396 & 0.19333 \\
    \textbf{Toy2} & 0.19011 & 0.16986 & 0.21294 & 0.19134 \\
    \textbf{USPS} & 0.31504 & 0.2731 & 0.26832 & 0.26679 \\
    \textbf{Poser (deg)} & 6.0763 & 5.62  & 7.1667 & 8.4739 \\
    \textbf{HEva (mm)} & 46.6987 & 53.0834 & 57.8221 & 57.8733 \\
    \bottomrule
    \end{tabular}%
    }
  \label{tab:exprbaselines}%
\end{table}%

\section{Discussion and Conclusion}
\label{sec:conclusion}

We proposed a framework for structured output regression based on SM-divergence.  We performed a  theoretical analysis to understand the properties of SMTGP prediction, which helped us learn $\alpha$ and $\beta$ parameters of SM-divergence. As a part of our analysis, we argued on a certainty measure that could be associated with each prediction. We here discuss these main findings of our work.

A critical theoretical aspect that is missing in the KL-based TGP formulation is understanding the cost function from regression-perspective. We cover this missing theory not only by analyzing the cost function based on KL, but instead, by providing an understanding of SMTGP cost function, which covers (KL, Renye, Tsallis, Bhattacharyya as special cases of its parameters). Our claims are supported by a theoretical analysis, presented in Section~\ref{sec:eiganalsis}. The main theoretical result is that SM-based TGP (SMTGP) prediction maximizes a certainty measure, we call $\varphi_{\alpha}(x,y) $, and the prediction does not depend on $\beta$ theoretically. A  probabilistic interpretation of  $\varphi_{\alpha}(x,y) $ was discussed as part of our analysis and it was shown to have a  negative correlation with the test error,  which is an interesting result; see figure ~\ref{fig:phixy}. The figure highlights the similarity between this SMTGP certainty measure  and predictive variance provided by Gaussian Process Regression (GPR) ~\citep{Rasmussen:2005} for single output prediction. A computationally efficient closed-form expression for SM-divergence was presented, which leads to \ignore{ two main results; see section  ~\ref{sec:smTGP}. First, SMTGP prediction complexity is} reducing  SMTGP prediction complexity from $O(N^3)$ to $O(N^2)$\footnote{$N$ is the number of the training points}; this makes SMTGP and KLTGP computationally equivalent.  Moreover, it reduces the number of operations to compute SM-divergence between two general Gaussian distributions, out of TGP context; see section  ~\ref{sec:smTGP}. \ignore{ Second, the simplified closed form expression reduces  the number of operations to compute SM-divergence between two  Gaussian Distributions from $\frac{5 N^3}{3}$ to $N^3$ operations if $\Delta \mu =0$, and from $2 N^3$ to $\frac{4 N^3}{3}$ operations, if $\Delta \mu \neq 0$.}\ignore{ Hence, it is important to mention that sections  ~\ref{sec:eiganalsis} and ~\ref{sec:smTGP} are major parts of our contribution.} Practically, \ignore{ we proposed a  structured regression framework based on SM Divergence. } we achieve  structured output regression by \ignore{.Our theoretical anlysis   helped us learn}tuning $\alpha$ and $\beta$ parameters of SM-divergence through cross validation under SMTGP cost function. We performed  an intensive evaluation of different tasks on five datasets and we experimentally observed a desirable generalization property of SMTGP. Our experiments report that our resultant approach, SMTGP, outperformed KLTGP, IKLTGP, GPR, HSIC, KTA, and W-KNN methods on two toy examples and three datasets.

We conclude by highlighting a  practical limitation of SMTGP, which is that it requires an additional time for tuning $\alpha$ and $\beta$ by cross validation. However, we would like to indicate that this cross  validation time is very short for the datasets ($0.9$ hour for poser dataset and 14 hours for Human Eva dataset). Using a smaller grid could significantly decrease this validation time. We used a grid of $20$ steps for $\alpha$. However, we found that in our experiments it is enough to use grid of size $10$ (step $0.1$ instead of $0.05$). In addition, selecting a single randomly selected validation set like Neural networks models could save a lot of time instead of selecting $\alpha$ and $\beta$ on the entire training set by cross validation, which we performed in our experiment.

\ignore{
A critical theoretical aspect that is missing in the KLTGP formulation is understanding the cost function from regression-perspective. We cover this missing theory not only by analyzing the cost function based on KL-divergence, but instead, we provide an understanding of SMTGP cost function, which covers (KL, Renye, Tsallis, Bhattacharyya as special cases of its parameters). Our claims are supported by a theoretical analysis, presented in section~\ref{sec:eiganalsis}. The main result is that SMTGP prediction maximizes a certainty measure, we name $\varphi_{\alpha}(x,y) $, and the prediction does not depend on $\beta$ theoretically. A  probabilistic interpretation of  $\varphi_{\alpha}(x,y) $ was discussed as part of our analysis, and it was shown to have a  negative correlation with the test error, which is an interesting result; see figure ~\ref{fig:phixy}. In the future, we will study this certainty measure, which could lead to a predictive variance measure, analogous to GPR. A computationally efficient closed-form expression for SM-divergence was presented, which leads to two main results; see section ~\ref{sec:smTGP}. First, SMTGP prediction complexity is reduced to $O(N^2)$ instead of $O(N^3)$. Second, the simplified closed-form expression reduces  the number of operations to compute SM-divergence between two  Gaussian Distributions from $\frac{5 N^3}{3}$ to $N^3$ operations if $\Delta \mu =0$, and from $2 N^3$ to $\frac{4 N^3}{3}$ operations, if $\Delta \mu \neq 0$.\ignore{ Hence, it is important to mention that sections  ~\ref{sec:eiganalsis} and ~\ref{sec:smTGP} are major parts of our contribution.} Practically, \ignore{ we proposed a  structured regression framework based on SM divergence. } we achieve  structured output regression by \ignore{.Our theoretical anlysis   helped us learn} tuning $\alpha$ and $\beta$ parameters of SM-divergence through cross validation under SMTGP cost function. We performed  an intensive evaluation of different tasks on five datasets and we experimentally observed a desirable generalization property of SMTGP.}

\ignore{
In the future, we will study this certainty measure, which could lead to a predictive variance measure, analogous to GPR. Finally, }
\ignore{
In summary, we propose a framework for structured output regression based on SM-divergence.  As a part of our formulation, we presented a simplified expression of SM-divergence for multi-variate Gaussian distributions, which decreases the computational complexity of SMTGP cost function evaluation and the gradient computation, required for optimization. Moreover, it reduces the number of operations to compute SM-divergence between two general Gaussian distributions, out of SMTGP context. We performed a  theoretical analysis to understand the properties of SMTGP prediction, which helped us learn $\alpha$ and $\beta$ parameters of SM divergence. As a part of our analysis, we argued on a certainty measure that could be associated with each prediction. Furthermore, our experiments reports that our resultant approach, SMTGP, outperformed KLTGP, IKLTGP, GPR, HSIC, KTA, and W-KNN methods on two toy examples and three datasets. 

}

\ignore{
Our contribution is not restricted to replacing KL-divergence with SM-divergence, while achieving better results. This work involves both theoretical and practical contributions. It is well known that Gaussian Process Regression ~\citep{Rasmussen:2005} is another view of kernel ridge regression ~\citep{hoerl2000rrb}, that has been addressed as a valuable work due to the certainty measure provided for the prediction. Similarly, our work start by studying some theoretical perspectives of KLTGP ~\citep{Bo:2010}, adopted in recent work (e.g. ~\citep{Yamada:2012}) from 2010-2013. We are not only proposing to use SM-Divergence in the place of KL-divergence. We are reporting some interesting results to share with the machine learning community from both theoretical and practical perspectives, as will be explained next.

\vspace{3mm}

\noindent\textbf{Theoretical Contribution : } }

\ignore{

\noindent\textbf{Practical Contributions and Complexities:}  Since our theoretical analysis is based on SMTGP, we firstly presented the new cost function in section ~\ref{sec:smTGP}, whose direct gradient computation based on ~\citep{SM:2012} is cubic complexity at test time. However, our simplification in Lemma ~\ref{lemma11} makes it straightforward to compute the gradient for prediction in quadratic complexity  at test time, based on the new equivalent cost function ( given that the matrix inverses are computed during the training).\ignore{ We agree with Rev3’s comment on the complexity, but this point is clarified in the paper in lines 440-447.However, we will consider having training and testing complexities stated clearer in the final version of the paper.}We extensively evaluated our approach on various datasets.
}

\ignore{
Rev4 was wondering how $\phi(\alpha)$ is related to any of the important parameter in our method. This relationship is already covered in Lma 4.1 and 4.2 and their proofs (Lma 4.2 proof shows the relationship between $\phi(\alpha)$ through derivation that starts from SMTGP cost function). We also showed its correlation with the test error in Fig1.

Rev3 argued about Cichoki \& Amari’s work. Our work shows another aspect of the relationship between entropies that was not covered by Cichoki’s 2010, which did not refer to SM generalization, proposed 3 yrs before Cichoki’s work by Akturk 2007(3rd ref in the paper). In addition, We believe SM-TGP is very interesting, since a single formulation spans 5 divergence measures. We will add this valuable reference in our intro so that the reader can also refer to other-div measures.

}

\section{Conclusion}

We presented a  theoretical analysis of a two-parameter generalized divergence measure, named Sharma-Mittal(SM), for structured output prediction. We proposed an alternative, yet equivalent, formulation for SM divergence whose computation is quadratic compared to cubic for the structured output prediction task (Lemma~\ref{lemma11}). We further investigated theoretical properties which is concluded by a probabilistic
causality direction of our SM objective function; see Section~\ref{sec:eiganalsis}. We performed extensive experiments to validate our findings on different tasks and datasets (two datasets for pose estimation, one dataset for image reconstruction and two toy examples). 

\textbf{Acknowledgment. } This research was partially funded by NSF award \# 1409683.

\section*{Appendices}

\subsection*{Appendix A: Relationship between $K_{X\cup x}^{-1}$ and $K_{X}^{-1}$}

$K_{X\cup x}^{-1}$ is $O(N^2)$ to compute, given that the signular value decomposition of $K_X$ is precomputed during the training, from which $K_X^{-1}$ and $K_X^{-2}$ are computed as well.  Then,  applying the matrix inversion lemma  ~\citep{minvlemma99}, $K_{X\cup x}^{-1}$ could be related to $K_{X}^{-1}$ as follows
\ignore{
\begin{equation}
K_{X\cup x}^{-1} = \begin{bmatrix}
K_X^{-1} +  \frac{{K_X^x}^T K_X^{-2}K_X^x}{k_X(x,x) - {K_X^x}^T K_X^{-1}K_X^x}& \frac{-K_X^{-1} K_X^x}{k_X(x,x) - {K_X^x}^T K_X^{-1} K_X^x} \\
\frac{-{K_X^x}^T K_X^{-1}}{k_X(x,x) - {K_X^x}^T K_X^{-1} K_X^x}  & \frac{1}{k_X(x,x) - {K_X^x}^T K_X^{-1}K_X^x} \\
\end{bmatrix}
\label{eqekxkxux}
\end{equation}}
\begin{equation}
{K_{X \cup x}}^{-1}  =   \begin{bmatrix}
K_X^{-1} +\frac{1}{c_x} K_X^{-1} K_X^x {K_X^x}^T K_X^{-1} & \frac{-1}{c_x}  K_X^{-1}  K_X^x\\ 
 \frac{-1}{c_x}  {K_X^x}^T K_X^{-1} & \frac{1}{c_x}
\end{bmatrix}
\label{eqekxkxux}
\end{equation}
where $c_x= \it{K_X}(x,x) -  {K_X^x}^T K_X^{-1}  K_X^x$. Given that $K_X^{-1}$ and $K_X^{-2}$ are already computed,  then computing $K_{X\cup x}^{-1}$ becomes $O(N^2)$ using equation ~\ref{eqekxkxux}. This equation applied to any kernel matrix (\ie     relating $K_{Y\cup y}^{-1}$ to $K_{Y}^{-1}$) . 

\ignore{
Since   $K_X^{-1}$ is already computed during the training, $K_{X\cup x}^{-1}$ is computed in $O(N^2)$ at test time. This equation applied to any kernel matrix (\ie $K_{Y\cup y}^{-1}$ is related to $K_{Y}^{-1}$ \etc).}

\subsection*{Appendix B: SM $D$ TGP}

\subsubsection*{{Cost Function}}
\begin{equation}
\small
\begin{split}
& D_{\alpha, \beta}(p(X,x) : p(Y,y)) = 
\frac{1}{\beta-1} \Bigg[\Big(\frac{|K_{X\cup x}|^{\alpha} {|K_{Y \cup y}|}^{1 -\alpha}}{|(\alpha K_{X\cup x}^{-1} +(1-\alpha) {K_{Y \cup y}}^{-1})^{-1}|}\Big)^{-\frac{1-\beta}{2(1-\alpha)}}  -1\Bigg] 
\end{split}
\end{equation}

\begin{equation}
\small
\begin{split}
L_{\alpha, \beta}(p(X,x) : p(Y,y)) = & \frac{1}{\beta-1} (k_Y(y,y) - {K_Y^y}^T {K_Y}^{-1} {K_Y^y})^\frac{-(1-\beta)}{2}     \cdot     \\
& |(\alpha K_{X\cup x}^{-1} +(1-\alpha) {K_{Y \cup y}}^{-1})|^\frac{-(1-\beta)}{2(1-\alpha)} \\
\end{split}
\end{equation}

From the matrix inversion Lemma, 

\begin{equation*}
\small
\begin{split}
&{K_{X \cup x}}^{-1}  =   \begin{bmatrix}
K_X^{-1} +\frac{1}{c_x} K_X^{-1} K_X^x {K_X^x}^T K_X^{-1} & \frac{-1}{c_x}  K_X^{-1}  K_X^x\\ 
 \frac{-1}{c_x}  {K_X^x}^T K_X^{-1} & \frac{1}{c_x}
\end{bmatrix}\\
&{K_{Y \cup y}}^{-1}  =   \begin{bmatrix}
K_Y^{-1} +\frac{1}{c_y} K_Y^{-1} K_Y^y {K_Y^y}^T K_Y^{-1} & \frac{-1}{c_y}  K_Y^{-1}  K_Y^y\\ 
 \frac{-1}{c_y}  {K_Y^y}^T K_Y^{-1} & \frac{1}{c_y}
\end{bmatrix}\\
\end{split}
\end{equation*}
where $c_x= \it{K_X}(x,x) -  {K_X^x}^T K_X^{-1}  K_X^x$, $c_y= \it{K_Y}(y,y) -  {K_Y^y}^T K_Y^{-1}  K_Y^y$

function evaluation  could be computed in $O (N^2)$ where $N$ is the number of points in the training set.

\begin{equation}
\small
\begin{split}
 log L_{\alpha, \beta}(p(X,x) : p(Y,y)) = &
 \frac{-(1-\beta)}{2} \cdot log (k_Y(y,y) - {K_Y^y}^T {K_Y}^{-1} {K_Y^y})     +    \\
& \frac{-(1-\beta)}{2(1-\alpha)} \cdot log |(\alpha K_{X\cup x}^{-1} +(1-\alpha) {K_{Y \cup y}}^{-1})| \\
\end{split}
\end{equation}

\subsubsection*{Gradient Calculation}

Following matrix calculus, $\frac{\partial log L(\alpha,\beta)}{\partial y(d)}$ could be expressed as follows 

\begin{equation}
\small
\begin{split}
\frac{\partial log L(\alpha,\beta)}{\partial y(d)}  = & \frac{-(1-\beta)}{2}  \frac{(\frac{\partial k_Y(y,y)}{\partial y(d)} - 2 {K_Y^y}^T {K_Y}^{-1} \frac{\partial K_Y^y}{\partial y(d)})}{(k_Y(y,y) - {K_Y^y}^T {K_Y}^{-1} {K_Y^y})}  + \\
&  \frac{-(1-\beta)}{2(1-\alpha)}  (1-\alpha) Tr \Big( \frac{\partial {K_{Y \cup y}}^{-1}}{\partial y(d)}   \cdot 
 \big(\alpha K_{X\cup x}^{-1} +(1-\alpha) {K_{Y \cup y}}^{-1} \big)^{-1}    \Big) \\
\end{split}
\end{equation}

Since  $\frac{\partial k_Y(y,y)}{\partial y(d)} = 0$ for rbf-kernels  and $\frac{-(1-\beta)}{2(1-\alpha)}  (1-\alpha) =  \frac{-(1-\beta)}{2}$, then

\begin{equation}
\small
\begin{split}
\frac{\partial log L(\alpha,\beta)}{\partial y(d)}  = & \frac{-(1-\beta)}{2}  \frac{( - 2 {K_Y^y}^T {K_Y}^{-1} \frac{\partial K_Y^y}{\partial y(d)})}{(k_Y(y,y) - {K_Y^y}^T {K_Y}^{-1} {K_Y^y})}  -  \\
&  \frac{-(1-\beta)}{2}  Tr \Big(  \big(\alpha K_{X\cup x}^{-1} +(1-\alpha) {K_{Y \cup y}}^{-1} \big)^{-1} \cdot {K_{Y \cup y}}^{-1} \frac{\partial K_{Y \cup y}}{\partial y(d)} {K_{Y \cup y}}^{-1}  \Big) 
\end{split}
\end{equation}

By factorization, the gradient could be further simplified into the following form.

\begin{equation}
\small
\begin{split}
\frac{\partial log L(\alpha,\beta)}{\partial y(d)}  = & \frac{-(1-\beta)}{2}  \frac{( - 2 {K_Y^y}^T {K_Y}^{-1} \frac{\partial K_Y^y}{\partial y(d)})}{(k_Y(y,y) - {K_Y^y}^T {K_Y}^{-1} {K_Y^y})}  \\
&-  \frac{-(1-\beta)}{2}  Tr \Big( {K_{Y \cup y}}^{-1} \big(\alpha K_{X\cup x}^{-1} +(1-\alpha) {K_{Y \cup y}}^{-1} \big)^{-1} \cdot  {K_{Y \cup y}}^{-1} \frac{\partial K_{Y \cup y}}{\partial y(d)}   \Big) 
\end{split}
\end{equation}

Since $(A B)^{-1} = B^{-1} A^{-1}$, where $A$ and $B$ are invertible matrices, then 

\begin{equation}
\small
\begin{split}
\frac{\partial log L(\alpha,\beta)}{\partial y(d)}  = & \frac{-(1-\beta)}{2}  \frac{( - 2 {K_Y^y}^T {K_Y}^{-1} \frac{\partial K_Y^y}{\partial y(d)})}{(k_Y(y,y) - {K_Y^y}^T {K_Y}^{-1} {K_Y^y})}  \\
&-  \frac{-(1-\beta)}{2}  Tr \Big(  \big(  K_{Y \cup y} \big(\alpha K_{X\cup x}^{-1} +(1-\alpha) {K_{Y \cup y}}^{-1} \big) K_{Y \cup y}\big)^{-1} \cdot  \frac{\partial K_{Y \cup y}}{\partial y(d)}   \Big) 
\end{split}
\end{equation}

Having applied matrix multiplications, then

\begin{equation}
\small
\begin{split}
\frac{\partial log L(\alpha,\beta)}{\partial y(d)}  = & \frac{-(1-\beta)}{2}  \frac{( - 2 {K_Y^y}^T {K_Y}^{-1} \frac{\partial K_Y^y}{\partial y(d)})}{(k_Y(y,y) - {K_Y^y}^T {K_Y}^{-1} {K_Y^y})}  \\
&-  \frac{-(1-\beta)}{2}  Tr \Big(  \big(\alpha K_{Y \cup y} K_{X\cup x}^{-1} K_{Y \cup y} +(1-\alpha) {K_{Y \cup y}} \big)^{-1} \cdot   \frac{\partial K_{Y \cup y}}{\partial y(d)}   \Big) \\
\end{split}
\label{eq111}
\end{equation}

where \begin{math}
\small
\frac{\partial K_{Y \cup y}}{\partial y(d)}  = \begin{bmatrix}
0 & \frac{\partial K_Y^y}{\partial y(d)}\\ 
\frac{\partial {K_Y^y}^T}{\partial y(d)} & 0 \\
\end{bmatrix}\\
\end{math}

Having analyzed equation ~\ref{eq111}, it is not hard to see that 

\begin{equation}
\small
Tr \Big(  \big(\alpha K_{Y \cup y} K_{X\cup x}^{-1} K_{Y \cup y} +(1-\alpha) {K_{Y \cup y}} \big)^{-1} \cdot   \frac{\partial K_{Y \cup y}}{\partial y(d)}   \Big) = 2  \cdot \mu_y^T  \cdot  \frac{\partial {K_Y^y}}{\partial y(d)}
\end{equation}  

where
\begin{math}
\small
 \Big(\alpha K_{Y \cup y} K_{X\cup x}^{-1} K_{Y \cup y} +(1-\alpha) {K_{Y \cup y}} \Big)\mu^{'}_y = [0,0,...0,1]^T
\end{math},  $\mu_y$ is a vector of all elements in $\mu^{'}_y$ except the last element. Hence,

\begin{equation}
\small
\begin{split}
\frac{\partial log L(\alpha,\beta)}{\partial y(d)}  = & \frac{-(1-\beta)}{2}  \frac{(- 2 \cdot {K_Y^y}^T {K_Y}^{-1} \frac{\partial K_Y^y}{\partial y(d)})}{(k_Y(y,y) - {K_Y^y}^T {K_Y}^{-1} {K_Y^y})}  -  \frac{-(1-\beta)}{2} \cdot 2  \cdot \mu_y^T  \cdot  \frac{\partial {K_Y^y}}{\partial y(d)}  \\
\end{split}
\end{equation}

Which directly leads to the final form of $\frac{\partial \log L(\alpha,\beta)}{\partial y^{(d)}}$ 

\begin{equation}
\small
\begin{split}
&\frac{\partial \log L(\alpha,\beta)}{\partial y^{(d)}}  = {(1-\beta)} \bigg[ \frac{ {K_Y^y}^T {K_Y}^{-1} \frac{\partial K_Y^y}{\partial y^{(d)}}}{(k_Y(y,y) - {K_Y^y}^T {K_Y}^{-1} {K_Y^y})}  +   \mu_y^T  \cdot  \frac{\partial {K_Y^y}}{\partial y^{(d)}} \bigg]  \\
\end{split}
\end{equation}

\subsection*{Appendix C: SM $D'$ TGP}
This derivation is much more simpler starting from our simplified closed-form expression of SM-divergence between two multivariate Gaussians. After ignoring multiplied positive constants and added constants (i.e. $|K_X|$, $|K_Y|$ ,$|(1-\alpha) K_X + \alpha K_Y|$ are multiplied constants, $-\frac{1}{1-\beta}$) is an added constant), the improved SMTGP cost function reduces to
 \begin{equation}
\small
\begin{split}
 L'_{\alpha, \beta}(p(X,x) : p(Y,y)) =  &\frac{1}{\beta-1} \Big[ {(k_Y(y,y) - {K_Y^y}^T {K_Y}^{-1} {K_Y^y})^\frac{-(1 -\beta)}{2}} \cdot \\
&({{{K_{xy}}^{\alpha} - {K_{XY}^{xy}}^T ((1-\alpha) K_X + \alpha K_Y)^{-1} {K_{XY}^{xy}})}^{\frac{1-\beta}{2(1-\alpha)}} } \Big]
\end{split}
\label{eq:sm2tgp}
\end{equation}
where ${K_{xy}}^{\alpha} = (1-\alpha) k_X(x,x) + \alpha k_Y(y,y)$, $K_{XY}^{xy} = (1-\alpha) K_X^x+  \alpha K_Y^y$. Since the cost function have two factors that does depend on $y$, we follow the rule that if $g(y) = f(y) r(y)$, then  $g'(y) = f'(y) r(y)+f(y)r'(y)$, which interprets the two terms the derived gradient below.
\begin{equation}
\small
\begin{split}
\frac{\partial L'(\alpha,\beta)}{\partial y^{(d)}}  (p(X,x) : p(Y,y))= & \frac{1}{\beta-1} \Big( {\frac{-(1 -\beta)}{2} (k_Y(y,y) - {K_Y^y}^T {K_Y}^{-1} {K_Y^y})^\frac{-(2 -\beta)}{2}} \cdot \\& {(\frac{\partial k_Y(y,y)}{\partial y^{(d)}} - 2 \cdot {K_Y^y}^T {K_Y}^{-1} \frac{\partial K_Y^y}{\partial y^{(d)}})}  \cdot\\
&({{{K_{xy}}^{\alpha} - {K_{XY}^{xy}}^T ((1-\alpha) K_X + \alpha K_Y)^{-1} {K_{XY}^{xy}})}^{\frac{1-\beta}{2(1-\alpha)}} } +\\
&{(k_Y(y,y) - {K_Y^y}^T {K_Y}^{-1} {K_Y^y})^\frac{-(1 -\beta)}{2}} \cdot \frac{1-\beta}{2(1-\alpha)}\\ & ( {{{K_{xy}}^{\alpha} - {K_{XY}^{xy}}^T ((1-\alpha) K_X + \alpha K_Y)^{-1} {K_{XY}^{xy}})}^{\frac{-\beta}{2(1-\alpha)}} }\cdot \\
&(\frac{\partial k_Y(y,y)}{\partial y^{(d)}} - 2 \cdot {K_{XY}^{xy}}^T ((1-\alpha) K_X + \alpha K_Y)^{-1} \cdot \alpha \frac{\partial K_Y^y}{\partial y^{(d)}}) \Big)
\end{split}
\label{eqgradsmd1}
\end{equation}

\subsection*{Appendix D: Advantage of computing SM divergence between two Multivariate Gaussians using Lemma ~\ref{lemma11}}


As far as we know, an efficient way to compute  $D_{\alpha,\beta}(\mathcal{N}_p, \mathcal{N}_q)$ in  equation~\ref{eqcfsm1} where  $\Delta \mu =0$,\footnote{derived directly from the closed form in~\citep{SM:2012}},  requires $\approx \frac{5 N^3}{3} $ operations; we illustrate as follows. Cholesky decompistion of $\Sigma_p$ and $\Sigma_q$ requires $ \frac{2 N^3}{3}$ operations, with additional $\frac{2 N^3}{3}$ operations for computing $\Sigma_p^{-1}$  and   $\Sigma_q^{-1}$ from the computed decompositions ~\citep{chol97}. Then, choseskly decompition of $\alpha  \Sigma_p^{-1}+ (1-\alpha) \Sigma_q^{-1}$ is computed in additional $\frac{N^3}{3}$ operations. From the computed decompositions, $|\Sigma_p|$, $|\Sigma_q|$ and $|\alpha  \Sigma_p^{-1}+ (1-\alpha) \Sigma_q^{-1}|^{-1} = |(\alpha  \Sigma_p^{-1}+ (1-\alpha) \Sigma_q^{-1})^{-1}| $ are computed in $3 N$ operations, which we ignore. Hence, the required computations for 
 $D_{\alpha,\beta}(\mathcal{N}_p, \mathcal{N}_q)$ are $\frac{2 N^3}{3}+ \frac{2 N^3}{3} + \frac{N^3}{3}  = \frac{5 N^3}{3}$ operations if $\Delta \mu =0$. In case $\Delta \mu \neq 0$, an additional $\frac{N^3}{3}$ operations are required to compute $(\alpha  \Sigma_p^{-1}+ (1-\alpha) \Sigma_q^{-1})^{-1}$, which leads to total of $\frac{6 N^3}{3} = 2 N^3$ operations\footnote{There are additional $N^2$ (matrix vector multiplication), and $N$ (dot product operations), which we ignore since they are not cubic}.
 
In contrast to $D_{\alpha,\beta}(\mathcal{N}_p, \mathcal{N}_q)$, $D'_{\alpha,\beta}(\mathcal{N}_p, \mathcal{N}_q)$ in lemma~\ref{lemma11} could be computed similarly in only $N^3$ operations, if $\Delta \mu =0$, required to compute the determinants of $\Sigma_p$, $\Sigma_q$, and $\alpha \Sigma_q +(1-\alpha) \Sigma_p$ by Cholesky decomposition. In case  $\Delta \mu \neq 0$, an additional $\frac{N^3}{3}$ operations are needed to compute $(\alpha {\Sigma_q} +(1-\alpha) {\Sigma_q})^{-1}$ \footnote{Additional $2 \cdot {O(N^{2.33})}$ for 2 matrix multiplications are ignored}. So, total of $\frac{4 N^3}{3}$ operations are needed if $\Delta \mu \neq 0$. \ignore{This leads to that  $D'_{\alpha,\beta}(\mathcal{N}_p, \mathcal{N}_q)$ 
is $1.67$ times faster to compute than $D_{\alpha,\beta}(\mathcal{N}_p, \mathcal{N}_q)$ under $\Delta \mu =0$ condition, $1.5$ times faster otherwise; see Appendix D for proof.}\ignore{
 3 determinant computations  and 2 matrix inversions, if $\Delta \mu = 0$. However, equation~\ref{eq:ncfe}. requires only 3 matrix determinant computations under the same condition. This simplification could be used to efficiently compute SM divergence between two Gaussian Distributions, out of the context of TGPs. 
}Accordingly,  $D'_{\alpha,\beta}(\mathcal{N}_p, \mathcal{N}_q)$ is 1.67 ( $\frac{5 N^2}{3}$   / $N^3$) times faster to compute than $D_{\alpha,\beta}(\mathcal{N}_p, \mathcal{N}_q)$ if  $\Delta \mu =0$, and 1.5 ( ${2 N^3}$   / $\frac{4 N^3}{3}$) times faster, otherwise.


\ignore{
\subsection*{Appendix A: Lemma ~\ref{lemma1} proof}
Directly from the definition of SM TGP in equation ~\ref{eq:asma} and ~\ref{eq:3ext}, SM TGP cost function could be written as, 
{\small
\begin{equation}
\begin{split}
&\hat{y}_(\alpha,\beta) =   \underset{y}{\operatorname{argmin}} \Bigg [D_{\alpha, \beta}(p(X,x) : p(Y,y))  \\= &\frac{1}{\beta-1} \Bigg( \Big(\frac{|K_{X}| ^{1-\alpha}| .  \eta_x^{1-\alpha} .{K_{Y}}|^\alpha . \eta_y^{\alpha}}{  | \alpha K_{Y } +(1-\alpha) { K_{X}} | .  \eta_{x,y}(\alpha)}\Big)^{\frac{(1-\beta)}{2 (1-\alpha)}} -1   \Bigg)\Bigg]
\end{split}
\label{eq:asm}
\end{equation}}
Comparing equation ~\ref{eq:smgen} to equation ~\ref{eq:asm}, then $(\frac{|K_{X}| ^{1-\alpha}| . \eta_x^{1-\alpha} .{K_{Y}}|^\alpha . \eta_y^{\alpha}}{  | \alpha K_{Y } +(1-\alpha) { K_{X}} | .  \eta_{x,y}(\alpha)})^\frac{1}{2}    =  \int_{-\infty}^{\infty}{p_{X,x}(t)^\alpha p_{Y,y}(t)^{1-\alpha} dt} \le1$, and because  $\eta_{x,y}(\alpha)>0$ and  $\eta_y>0$, this leads directly to that $\varphi_{\alpha}(x,y) = \frac{\eta_x^{1-\alpha}  \eta_y^{\alpha}}{ \eta_{x,y}(\alpha)} \le \frac{  | \alpha K_{Y } +(1-\alpha) { K_{X}} | }{|K_{X}| ^{1-\alpha} |{K_{Y}}|^\alpha } $. Finally,  $\int_{-\infty}^{\infty}{p_X(t)^\alpha p_Y(t)^{1-\alpha} dt} = \big(\frac{|K_{X}| ^{1-\alpha} |{K_{Y}}|^\alpha }{  | \alpha K_{Y } +(1-\alpha) { K_{X}} | } \big)^{\frac{1}{2}}$. Since, $\int_{-\infty}^{\infty}{p_X(t)^\alpha p_Y(t)^{1-\alpha} dt} \le 1$, then $\frac{|K_{X}| ^{1-\alpha} |{K_{Y}}|^\alpha }{  | \alpha K_{Y } +(1-\alpha) { K_{X}} | } \le 1$.
\subsection*{Appendix B: Lemma  ~\ref{lemma2} Proof} 
The proof of this lemma defends on the analysis the cost functions  $[D'_{\alpha, 1-\tau}(p(X,x) : p(Y,y))$,  and  $[D'_{\alpha, 1+\tau)}(p(X,x) : p(Y,y))$  are equivalent. We start by denoting $Z_{\alpha}(y) = \frac{|K_{X\cup x}| ^{1-\alpha}|{K_{Y \cup y}}|^\alpha}{  | \alpha K_{Y \cup y} +(1-\alpha) { K_{X\cup x}} |}= (\int_{-\infty}^{\infty}{p_{X,x}(t)^\alpha p_{Y,y}(t)^{1-\alpha} dt} )^2 \le  1$. From this notation,  SM cost functions could be re-written as,
{\small
\begin{equation}
\small{
\begin{split}
&D_{\alpha, 1-\tau}(p(X,x) : p(Y,y))) = \frac{1}{-\tau} \Bigg( \Big(Z_{\alpha}(y)\Big)^{\frac{\tau}{2 (1-\alpha)}} -1   \Bigg)\\
&D_{\alpha, 1+\zeta}(p(X,x) : p(Y,y))) = \frac{1}{\zeta} \Bigg( \Big(Z_{\alpha}(y)\Big)^{\frac{-\zeta}{2 (1-\alpha)}}  -1  \Bigg)\\
\end{split}
}
\label{eq:asmcost}
\end{equation}}
From equation ~\ref{eq:asmcost} and under the assumption that $0<\alpha<1$ and $Z_{\alpha}(y)\le1$, then $D_{\alpha, 1-\tau}(p(X,x) : p(Y,y)))\ge0, D_{\alpha, 1+\zeta}(p(X,x) : p(Y,y)) \ge0$. Both are clearly  minimized as $Z_{\alpha}(y)$ approaches 1 (i.e. maximized,   since $Z_{\alpha}(y)\le1$) .Comparing  equation ~\ref{eq:asm} and  ~\ref{eq:asmcost}, $Z_{\alpha}(y) =  (\int_{-\infty}^{\infty}{p_{X,x}(t)^\alpha p_{Y,y}(t)^{1-\alpha} dt} )^2 =  \frac{|K_{X\cup x}| ^{1-\alpha}|{K_{Y \cup y}}|^\alpha}{  | \alpha K_{Y \cup y} +(1-\alpha) { K_{X\cup x}} |}=   \frac{|K_{X}| ^{1-\alpha}| .  \eta_x^{1-\alpha} .{K_{Y}}|^\alpha . \eta_y^{\alpha}}{  | \alpha K_{Y } +(1-\alpha) { K_{X}} | .  \eta_{x,y}(\alpha)}\  \propto \varphi_{\alpha}(x,y)= \frac{  \eta_x^{1-\alpha}  \eta_y^\alpha}{\eta_{x,y}(\alpha)}$, since $| \alpha K_{Y } +(1-\alpha) { K_{X}} | $,  $|{ K_{X}} | $, and  $| K_{Y } | $ do not depend on the predicted output $\hat{y}$.  This indicates that SMTGP optimization function is inversely proportional to $\varphi_{\alpha}(x,y)$, we upper-bounded in Lemma ~\ref{lemma1}. The main result is that $\hat{y}_(\alpha, \beta)$ maximizes $\frac{ \eta_x^{1-\alpha} . \eta_y^\alpha}{\eta_{x,y}(\alpha)} \le \frac{  | \alpha K_{Y } +(1-\alpha) { K_{X}} | }{|K_{X}| ^{1-\alpha} |{K_{Y}}|^\alpha }$ and it does not depend on $\beta$ theoretically.}

\bibliographystyle{spbasic}      
\bibliography{example_paper2}



\ignore{

}
\end{document}